\documentclass[preprint,12pt]{elsarticle}
\usepackage{imakeidx}\usepackage{enumitem}
\usepackage{pdflscape}
\usepackage{mathtools}
\usepackage{nameref}
\usepackage{makecell}
\usepackage[hidelinks]{hyperref}
\usepackage[noend]{algpseudocode}
\makeindex

\usepackage{amssymb}
\usepackage{amsthm}

\usepackage[pagewise]{lineno}
\usepackage{algorithm}
\usepackage{float}
\usepackage{algpseudocode}
\usepackage{todonotes}

\usepackage{xspace}
\usepackage{comment}
\usepackage{bm}
\usepackage{amsthm}
\usepackage{todonotes}

\newcommand{\set}[1]{\ensuremath{\{#1\}}\xspace}
\newcommand{\tuple}[1]{\ensuremath{\langle #1 \rangle}\xspace}

\newcommand{\op}[1]{\mathrm{#1}}
\newcommand{\lb}{\underline}
\newcommand{\ub}{\overline}

\newcommand{\ttt}{\texttt}

\newcommand{\pluseq}{\mathrel{+}=}
\newcommand{\minuseq}{\mathrel{-}=}

\newcommand{\mC}{\mathcal{C}\xspace}

\newcommand{\mG}{\mathcal{G}\xspace}
\newcommand{\mH}{\mathcal{H}\xspace}
\newcommand{\mI}{\mathcal{I}\xspace}

\newcommand{\mS}{\mathcal{S}\xspace}
\newcommand{\mT}{\mathcal{T}\xspace}

\newcommand{\mX}{\mathcal{X}\xspace}

\newcommand{\rational}{\ensuremath{\mathbb{Q}}\xspace}

\renewcommand{\natural}{\ensuremath{\mathbb{N}}\xspace}

\usepackage{pifont}
\usepackage{amssymb}

\newcommand{\pddl}{\textsc{pddl}\xspace}
\newcommand{\pre}{\op{pre}}

\newcommand{\cond}{\op{cond}}

\newcommand{\eff}{\op{eff}}

\renewcommand{\implies}{\rightarrow}

\newcommand{\patty}{\textsc{patty}\xspace}

\newcommand{\pattern}{{\prec\xspace}}

\newcommand{\bigand}{\bigwedge}
\newcommand{\bigor}{\bigvee}

\usepackage{ stmaryrd }

\newcommand{\asseq}{:=}

\newcommand{\smt}{\textsc{smt}\xspace}

\newcommand{\pas}{\textsc{p}a\textsc{s}\xspace}
\newcommand{\pspace}{\textsc{pspace}\xspace}

\newcommand{\spp}{\textsc{spp}\xspace}

\newcommand{\ipc}{\textsc{ipc}\xspace}
\newcommand{\ite}{\textsc{ite}\xspace}

\DeclareFontFamily{U}{mathb}{\hyphenchar\font45}
\DeclareFontShape{U}{mathb}{m}{n}{
<-6> mathb5 <6-7> mathb6 <7-8> mathb7
<8-9> mathb8 <9-10> mathb9
<10-12> mathb10 <12-> mathb12
}{}
\DeclareSymbolFont{mathb}{U}{mathb}{m}{n}
\DeclareMathSymbol{\cespatternbasic}{\mathrel}{mathb}{"CE}

\newcommand{\indpi}{\pi^{\shortdownarrow}}
\newcommand{\instradi}{\textsc{InSTraDi}\xspace}

\newcommand{\arpg}{\textsc{arpg}\xspace}

\renewcommand{\iff}{\;\leftrightarrow\;}

\newcommand{\sota}{\textsc{s}o\textsc{ta}\xspace}

\newtheorem*{example*}{Example}
\newtheorem{theorem}{Theorem}
\newtheorem{corollary}{Corollary}

\usepackage{url}

\usepackage[utf8]{inputenc} 
\usepackage[T1]{fontenc}

\newcommand{\ICE}{\textsc{ice}\xspace}
\newcommand{\ICEs}{\textsc{ice}s\xspace}

\newcommand{\IC}{\textsc{ic}\xspace}
\newcommand{\ICs}{\textsc{ic}s\xspace}

\newcommand{\IE}{\textsc{ie}\xspace}
\newcommand{\IEs}{\textsc{ie}s\xspace}

\newcommand{\tstart}{\textsc{start}\xspace}
\newcommand{\tend}{\textsc{end}\xspace}
\newcommand{\talpha}{\textsc{alpha}\xspace}
\newcommand{\tomega}{\textsc{omega}\xspace}

\newcommand{\ms}{\ensuremath{\mathrm{ms}}}
\newcommand{\ieff}{\ensuremath{\mathrm{ieff}}}
\newcommand{\pieff}{\ensuremath{\mathrm{pieff}}}
\newcommand{\icond}{\ensuremath{\mathrm{icond}}}

\newcommand{\layer}{\mathrm{layer}}
\newcommand{\snap}{\textsc{s}}
\newcommand{\snapPi}{\Call{snap}{\Pi}}

\newcommand{\tamer}{\textsc{tamer}\xspace}
\newcommand{\anml}{\textsc{anml}\xspace}
\newcommand{\anmlsmt}{\textsc{anmlsmt}\xspace}
\newcommand{\optic}{\textsc{Optic}\xspace}
\newcommand{\itsat}{\textsc{itsat}\xspace}
\newcommand{\lpg}{\textsc{lpg}\xspace}
\journal{Artificial Intelligence Journal}
\usepackage{rotating}
\usepackage{pdflscape}

\begin{document}

\begin{frontmatter}



\title{Symbolic Pattern Temporal Numeric Planning with Intermediate Conditions and Effects}


\author[unige]{Matteo Cardellini\corref{cor1}}
\ead{matteo.cardellini@unige.it}
\cortext[cor1]{Corresponding author.}
\author[unige]{Enrico Giunchiglia}
\ead{enrico.giunchiglia@unige.it}

\affiliation[unige]{organization={DIBRIS, Università di Genova},
            city={Genova},
            country={Italy}}

\begin{abstract}
Recently, a Symbolic Pattern Planning (\spp) approach was proposed for numeric planning where a pattern (i.e., a finite sequence of actions) suggests a causal order between actions. 
The pattern is then encoded in a \smt formula whose models correspond to valid plans. If the suggestion by the pattern is inaccurate and no valid plan can be found, the pattern is extended until it contains the causal order of actions in a valid plan, making the approach complete.  
In this paper, we extend the \spp approach to the temporal planning with Intermediate Conditions and Effects (\ICEs) fragment, where $(i)$ actions are durative (and thus can overlap over time) and have conditions/effects which can be checked/applied at any time during an action's execution, and $(ii)$ one can specify plan's conditions/effects that must be checked/applied at specific times during the plan execution. 
Experimental results show that our \spp planner \patty $(i)$ outperforms all other planners in the literature in the majority of temporal domains without \ICEs, $(ii)$ obtains comparable results with the \sota search planner for \ICEs in literature domains with \ICEs, and $(iii)$ outperforms the same planner in a novel domain based on a real-world application.
\end{abstract}



\begin{keyword}
Automated planning \sep
Temporal and 
Numeric Planning \sep
Symbolic Planning \sep
Pattern Planning.
\end{keyword}

\end{frontmatter}


\section{Introduction}
Planning is one of the oldest tales in Artificial Intelligence \cite{McCarthy1969-MCCSPP}, telling the story of autonomous agents perceiving and acting in an environment. Depending on how the environment is represented, and how agents act upon it, in the last 30 years, several planning fragments have been explored. In this paper, we focus on the \textsl{deterministic} kind of planning, where the agent's actions have a certain outcome on the environment and the agent can fully perceive the state of the environment. Initially, there was the \textsl{classical planning} fragment \cite{fikes1971strips}, which represents the environment via \textsl{propositional} (i.e., true or false) variables and the agents' actions are instantaneous and executed sequentially. Deciding plan-existence in this fragment is \pspace-complete \cite{Bylander_1991}. If we change the representation of the environment and allow for variables to assume values in the rationals, then we move to the \textsl{numeric planning} fragment, which is undecidable even in the presence of just two numeric and one propositional variable \cite{gnad2023planning}. In \textsl{temporal planning} \cite{Fox_Long_2003}, instead, actions have durations, and their execution can overlap over time. In the specification of temporal planning introduced by \citet{Fox_Long_2003}, \textsl{conditions} on the executability of a \textsl{durative action} can be imposed to hold either overall its execution, or at its start or at its end. Action's \textsl{effects} can be applied only at its start or end. To overcome this known limitation (see \cite{smith2003case}), \textsl{temporal planning with intermediate conditions and effects (\ICEs)} \cite{valentini_temporal_2020} was introduced, where $(i)$ actions' conditions/effects, can be checked/applied at any time during an action's execution, and $(ii)$ one can specify plan's conditions/effects that must be checked/applied at specific times during the plan. Temporal planning with or without \ICEs is still \pspace-complete \cite{gigante_expressive_2022} as long as we only allow for propositional variables for representing the environment. In fact, \ICEs can be "compiled away" \cite{smith2003case,fox2004investigation,gigante_expressive_2022} paying the price of increasing the plan length linearly in the number of \ICEs \cite{nebel2000compilability}.
If we add the numeric dimension, then we move to \textsl{temporal numeric planning}, which can be undecidable as in the numeric case.
Despite the complexity, several efficient planning systems for temporal numeric planning problems have been proposed, 
traditionally, either based on \textsl{Heuristic Search} or on \textsl{Planning as Satisfiability}.  
Heuristic Search \cite{Bonet_Geffner_2001}, given a set of states reachable from the initial state, selects one of these states and extends it through a sequence of actions likely to reach a goal state, keeping track of the temporal constraints between actions  during the search, and pruning states which fail to respect them (see, e.g., \cite{optic,LPG,valentini_temporal_2020}).  
In contrast, Planning as Satisfiability (\pas) \cite{DBLP:conf/ecai/KautzS92}  
$(i)$ sets a \textsl{bound} or \textsl{number of steps} $n \in \natural$, initially $n=0$;  
$(ii)$ encodes, as a logical formula, all plans with $n$ actions which are executable from the initial state and respect the temporal constraints;  
$(iii)$ checks whether any such sequence constitutes a valid plan by enforcing the \textsl{initial and goal states} on the first and last states; and  
$(iv)$ if no plan is found, repeats the process from step $(ii)$ while incrementing $n$, (see, e.g., \cite{ShinD04,RankoohG15, Rintanen15, Rintanen17}).  

\textsl{Symbolic Pattern Planning} (\spp) \cite{DBLP:conf/aaai/CardelliniGM24,Cardellini_Giunchiglia_2025} was recently introduced for numeric planning  problems where a \textsl{pattern} $\pattern$ -- i.e., a finite sequence of \textsl{snap} actions-- suggests a causal order between actions. A pattern $\pattern_I$ is computed once in the initial state: then, starting from $\pattern = \pattern_I$ if the suggestion provided by $\pattern$ is inaccurate and no valid plan can be found, $\pattern$ is \textsl{extended} by concatenating $\pattern_I$ to it, and the procedure is iterated until $\pattern$ contains the causal order of actions in a valid plan, making the procedure complete. In \cite{DBLP:conf/kr/CardelliniG25} the \spp approach was improved, showing how to symbolically search for a valid plan  by iteratively extending (adding actions to) and compressing (removing actions from) an initially computed pattern. Specifically, the idea is to concatenate $(i)$ an initial pattern that allows reaching a state $s$ satisfying a subset of the goals,  and $(ii)$ another pattern computed from $s$, which is extended until a state satisfying new additional goals can be reached. 
At the beginning, the first pattern is computed from the initial state, and the two steps are iterated until all the goals have been satisfied, at which point a valid plan is returned. \citet{cardellini2025aij} considered several pattern selection procedures and mechanisms for improving the quality of the returned solution, performing ablation studies on the different strategies.

In this paper, we extend the \spp approach to Temporal Planning with Intermediate Conditions and Effects (\ICEs) where the pattern suggests the causal order between the intermediate conditions and intermediate effects, both relative to actions and to the plan. We perform an experimental analysis comparing our planner \patty with other temporal planners in the literature on temporal domains both with and without \ICEs. We show that our planner \patty achieves overall better performances, solving $168$ out of $190$ instances, compared to the best literature search-based solver \tamer \cite{valentini_temporal_2020}, solving $131$ instances, and the best literature \pas-based solver \anmlsmt \cite{Panjkovic_Micheli_2023}, solving $98$.

As shown by \citet{DBLP:conf/aaai/CardelliniGM24}, the pattern encoding generalizes and improves the rolled-up encoding \cite{Scala_Ramirez_Haslum_Thiebaux_2016_Rolling} and the relaxed-relaxed-$\exists$ encoding \cite{balyo_relaxing_2013, bofill_espasa_villaret_2016}, both used in the satisfiability approach. In search-based planning, planners for classical planning like \textsc{yahsp} \cite{vidal2004yahsp} exploit sequences of actions (look-ahead plans) to reach intermediate states, similar to macro-actions \cite{alarnaouti2024macroactions}. Patterns, however, generalize these ideas by allowing subsequences of arbitrary actions and accounting for repeated applications of actions (rolls).  On the modelling side, \cite{DBLP:conf/ijcai/BonassiGS22} introduces \textsl{planning with action constraints} (\textsc{pac}) in \pddl{3}, specifying expected action trajectories to guide planning. While unrelated to patterns, it illustrates that guiding the order of actions can be beneficial in many domains. The \tamer planner \cite{valentini_temporal_2020}, built specifically for temporal planning with \ICEs, implements a planning as search strategy, where each state is enriched with a \textsl{Simple Temporal Network} (\textsc{stn}) \cite{dechter1991temporal} keeping track of the temporal order between actions, pruning states as soon as the \textsc{stn} becomes infeasible. This paper builds on the conference paper \cite{Cardellini_Giunchiglia_2025}, where the \spp approach for temporal numeric planning (without \ICEs) was initially presented.

This paper is structured as follows. After the presentation of a motivating example in Sec.\ \ref{sec:motivating}, we formalize a Temporal Numeric Planning with \ICEs task in Section \ref{sec:ices-formal}. Then, in Sec.\ \ref{sec:approach} we formalize our approach: $(i)$ in Sec.\ \ref{sec:rolling} we extend the concept of \textsl{rolling}, taken from \cite{Scala_Ramirez_Haslum_Thiebaux_2016_Rolling}, to durative actions, $(ii)$ in Sec.\ \ref{sec:pattern} we discuss the pattern, how it is defined (Sec.\ \ref{sec:pattern-definition}) and how it is computed (Sec.\ \ref{sec:pattern-computation}), $(iii)$ in Sec.\ \ref{sec:spp} we propose our \spp encoding to deal with both the causal and temporal aspects of temporal planning with \ICEs, and $(iv)$ in Sec.\ \ref{sec:correctness-completeness} we prove its correctness and completeness. Finally, in Sec.\ \ref{sec:experimental} we perform an experimental analysis on our motivating example and several domains from the literature.

\subsection{Motivating Example} \label{sec:motivating}

Throughout this paper, we employ a motivating example to illustrate, ground and benchmark our proposed approach. 
The In-Station Train Dispatching (\instradi) Problem \cite{lamorgese2015exact,cardellini_-station_2021} consists in planning (or \textsl{dispatching}) the movements of trains inside a railway station. 
An example of a railway station is presented in Fig. \ref{fig:instradi}. 
The smallest component of a station is the \textsl{track circuit}\footnote{The name ``circuit" comes from the fact that each rail is connected to a pole of a battery and a relay. When a train moves on top of a track circuit, its axles close the circuit and make current flow through the relay, signalling the presence of a train.}, i.e., a segment of a \textsl{track} (i.e., two \textsl{rails}). 
\textsl{Signals}\footnote{Signals can be of many types, e.g., electrical and/or mechanical. In our simplified version, we will use only (electrical) traffic lights.} authorize trains' movements in the station. Usually, signals are in proximity of \textsl{entry points} of the station (e.g., signals \ttt{03} and \ttt{01} in Fig. \ref{fig:instradi}), \textsl{exit points} (e.g., \ttt{02} and \ttt{04}) and \textsl{platforms} (e.g., \ttt{21}, \ttt{41}, \ttt{22}, \ttt{42}, \ttt{23} and \ttt{43}). 
Entry and exit points delimit the station from the external line, usually connecting directly to another railway station. Trains can stop in platforms, allowing passengers to board/alight. Formally, a platform is a set of track circuits (e.g., platform \ttt{I} is composed of track circuits \ttt{104}, \ttt{105} and \ttt{106}). 
To simplify the movements of trains, human dispatchers use \textsl{routes}, i.e., sequences of track circuits, to move a train between two signals. Fig. \ref{fig:instradi} shows two routes in blue and in red. For example, route \ttt{03-23} (in blue) goes from signal \ttt{03} to signal \ttt{23} and is composed of track circuits\footnote{In our example, we skip, for simplicity, the concept of \textsl{switches}, that allow trains to move between parallel track circuits, like the one between \ttt{108} and \ttt{118}.} \ttt{109}, \ttt{108}, \ttt{118}, \ttt{124}, \ttt{123}. Before moving through a route, to guarantee safety, the route must be \textsl{reserved}, achievable only if all the track circuits of the route are not occupied by another train.

\begin{figure}
    \centering
    \includegraphics[width=1\linewidth]{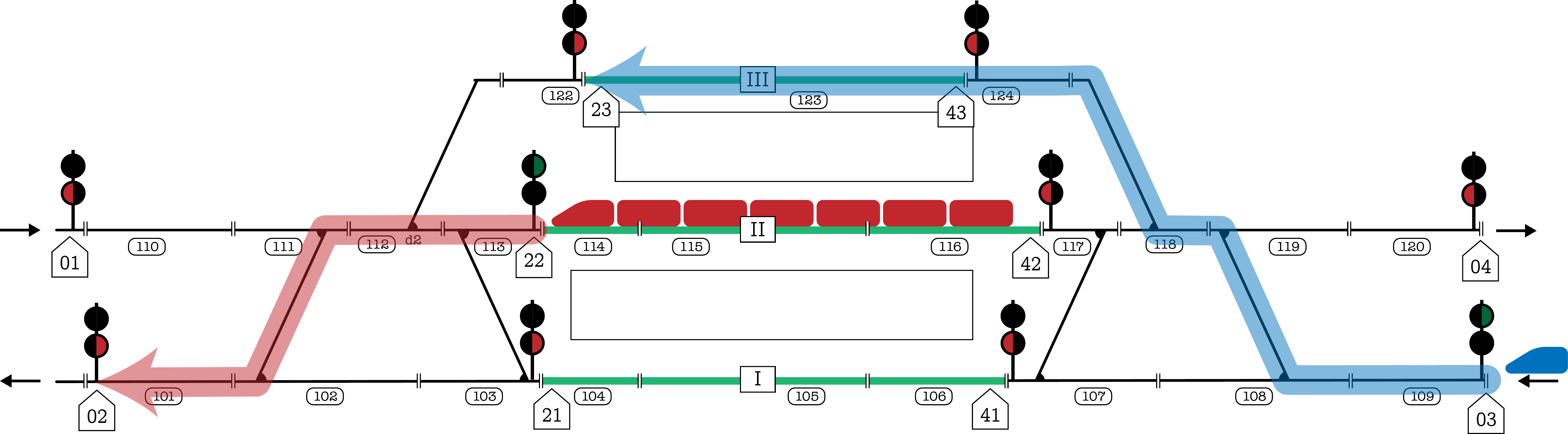}
    \caption{The motivating \instradi example with two trains, \ttt{red} and \ttt{blue}. The train \ttt{red} is stopping at platform $\ttt{II}$ oriented towards the green signal $\ttt{22}$ which allows entering route \ttt{22-02} to exit the station from the exit point \ttt{02}. Train \ttt{blue} is about to enter the station from the entry point \ttt{03} using route \ttt{03-23} to reach platform \ttt{III}.}
    \label{fig:instradi}
\end{figure}

In this work, we suppose to be provided with a \textsl{forecast system} to estimate, for each train, the travel times on track circuits and routes, the stopping time at platforms and the estimated arrival of trains at entry points\footnote{The simulation of the actual timings of the movements of trains inside the station would require knowing the length of track circuits, platforms and trains, as well as how trains accelerate and decelerate inside the station. To avoid this burden, machine learning approaches (see e.g., \cite{boleto_-station_2021}) have been implemented to provide forecast times with high accuracy, straightforwardly employable in our model.}. A \textsl{nominal timetable}, determines trains' earliest depart time from a platform and their destination (i.e., the exit point they must exit). While each train can arrive at a platform earlier or later than what the nominal timetable determines, it cannot surely depart earlier. 
Further, when exiting the station, trains have to respect a \textsl{priority}, i.e., an exit order between the trains, used to avoid that faster trains get stuck behind slower ones while on the tracks connecting to another station. 
Finally, during the operation, it is possible that moving through some track circuits in a window of time is forbidden, due to \textsl{maintenance}.

A \textsl{dispatchment}, i.e., a solution to the \instradi problem, is a set of \textsl{timed commands}, denoting at which time each train is authorized to enter a route. A dispatchment is valid when
\begin{enumerate}
    \item the safety is guaranteed, i.e., at any time, at most one train is on a track circuit,
    \item trains respect the nominal timetable, i.e., they $(i)$ stop at a platform, $(ii)$ haven't left the platform earlier than announced, and $(iii)$ have exited the station from the right exit point,
    \item the priority between trains exiting the station is respected,
    \item no train moves through the maintained track circuits during the maintenance windows.
\end{enumerate}

\section{Temporal Numeric Planning with ICEs} \label{sec:ices-formal}

In this section, we define Temporal Numeric Planning with \ICEs tasks, and then we exemplify how the motivating example can be modelled in the introduced language. 

\subsection{The Planning Fragment}

\paragraph{Planning Task} 
A \textsl{temporal numeric planning task with Intermediate Conditions and Effects\footnote{This formalization is partially inspired by \citet{valentini_temporal_2020}, keeping the general idea but extending it with numeric variables.} (\ICEs)} (in the following just "{\sl planning task}") is a tuple $\Pi = \tuple{X, A, I,G,C,E}$, where $X = \tuple{V_B,V_N}$ is a pair of finite sets containing \textsl{propositional} ($V_B$) and \textsl{numeric} ($V_N$) variables, with domains $\set{\top, \bot}$ (i.e., truth and falsity), and rationals in $\rational$, respectively. In the following, we use $v \in X$, to denote any propositional or numeric variable in $V_B \cup V_N$.
A \textsl{propositional condition} for a propositional variable $v$ is either $v = \top$ or $v=\bot$. A \textsl{numeric condition} has the form $\psi\unrhd 0$, where\footnote{In examples, for brevity, we will also use $\psi \leq 0$ instead of $-\psi \geq 0$, $\psi < 0$ instead of $-\psi > 0$ and $\psi = 0$ instead of the two conditions $\psi \geq 0$ and $-\psi \geq 0$.} $\unrhd \in \{\ge, >\}$ and $\psi$ is a \textsl{linear expression} over $V_N$, i.e., an expression of the form $\sum_{x\in V_N} k_x x + k$, with $k_x, k
\in \rational$. If, in a linear expression $\psi$, we have the coefficient $k_x \neq 0$, we say that $\psi$ \textsl{contains}~$x$. A \textsl{propositional effect} is of the form $v \asseq \top$ or $v \asseq \bot$ with $v \in V_B$. A \textsl{numeric effect} is of the form $x \asseq \psi$, with  $x \in V_N$ and $\psi$ a linear expression. We shorten a \textsl{linear increment}, i.e., a numeric effect $x := x + \psi'$, with $\psi'$ a linear expression not containing $x$, as $x \pluseq \psi'$.
In the planning task $\Pi$, the set $A$ is a finite set of \textsl{durative actions}. A durative action $b$ is a tuple $\tuple{\icond(b), \ieff(b), [L,U]}$, where $\icond(b)$ and $\ieff(b)$ are sets of \textsl{action intermediate conditions} and \textsl{action intermediate effects} of $b$, while $L, U \in \mathbb{Q}^{\geq 0}$, with $L \le U$, are the {\sl bounds} on the  duration of $b$. An intermediate condition (\IC) is a tuple $c = \tuple{\tau^\vdash, \tau^\dashv, \cond(c)}$ where $\tau^\vdash$ and $\tau^\dashv$ are the \textsl{relative times} denoting the start and end of when $c$ has to hold, and $\cond(c)$ is a set of propositional or numeric conditions. An intermediate effect (\IE) is a tuple $e = \tuple{\tau, \eff(e)}$ where $\tau$ is a relative time denoting when $e$ is applied and $\eff(e)$ is a set of propositional and numeric effects. We assume that for each variable $v \in X$, $v$ occurs in $\op{eff}(e)$ at most once to the left of the operator ``$\asseq$'', and when this happens we say that $v$ is \textsl{assigned} by $e$.
A \textsl{relative time} is either \textsl{action-relative} or \textsl{plan-relative}. An action-relative time is of the form $\tstart + k$ or $\tend -k$ where $(i)$ $\tstart$ and $\tend$ are \textsl{anchors} for the starting or ending time, respectively, of the durative action containing the \IC/\IE, and $(ii)$ $k\in \rational^{\ge 0}$ is the \textsl{offset} from the anchor. A plan-relative time is either $\talpha + k$ or $\tomega - k$, with $k \in \rational^{>0}$, where $\talpha$ is the plan's first executed action start time (usually $0$) and $\tomega$ is the plan's last action end time\footnote{Notice that for a plan-relative time, we assume $k > 0$, since it doesn't make sense to check a condition or impose an effect in the initial or final state.}. Depending on whether the relative times are action-relative or plan-relative we categorize the \ICEs into \textsl{action-\ICs, plan-\ICs, action-\IEs} and \textsl{plan-\IEs}, respectively. Let $b = \tuple{\icond(b), \ieff(b), [L,U]}$ be a durative action, it is clear that $L$ can be assumed to be not lower than all the offsets $k$ in the relative times $\tstart + k$ and $\tend - k$ in $\icond(b)$ and $\ieff(b)$, since each  condition/effect of $b$ has to take place while the action is in execution. 
Moreover, to make notation easier along the paper, we assume to always have \ICs both at the start and at the end of $b$, e.g., adding in $\icond(b)$ two always-respected \ICs $\tuple{\tstart, \tstart, \set{0=0}}$ and $\tuple{\tend, \tend, \set{0=0}}$.
A \textsl{snap} or \textsl{instantaneous action} is a durative action with $L=U=0$ and where all the \ICEs in $\icond(b) \cup \ieff(b)$ have $\tstart$ as anchor in the relative time. To simplify notation, we refer to snap actions as a pair $a = \tuple{\pre(a), \eff(a)}$ instead of the longer form for a durative action $\tuple{\set{\tuple{\tstart, \tstart, \pre(a)}}, \set{\tuple{\tstart, \eff(a)}}, [0,0]}$. For a durative action $b = \tuple{\icond(b), \ieff(b), [L,U]}$ we assume that if $U > 0$, then $L > 0$, i.e., an action can be instantaneous only if specified as $L=U=0$. In the following, we will use the symbols $v,w,x$ to denote variables, $\psi$ to denote expressions, $b$ to denote durative actions, and $a$ to denote snap actions, possibly decorated with super/subscripts.
Finally, in the planning task $\Pi$: $I$ is the \textsl{initial state}, where a \textsl{state} is a total assignment of the variables in $X$ to their respective domains; $G$ is a set of conditions called \textsl{goals} or {\sl goal  conditions}; $C$ and $E$ are sets of plan-\ICs and plan-\IEs, respectively.
We assume that both $(i)$ in $\icond(b)$ for each $b \in A$ and $(ii)$ in $C$, there are no distinct pair of \ICs $c_1 = \tuple{\tau^\vdash_1, \tau^\dashv_1, \cond(c_1)}$ and $c_2 = \tuple{\tau^\vdash_2, \tau^\dashv_2, \cond(c_2)}$ with $\tau^\vdash_1 = \tau^\vdash_2$ and $\tau^\dashv_1 = \tau^\dashv_2$ (analogously for \IEs in $\ieff(b)$ for each $b \in A$ and in $E$). If this was the case, we could join them in only one \IC $\tuple{\tau^\vdash_1, \tau^\dashv_1, \cond(c_1) \cup \cond(c_2)}$.

\paragraph{Plan and Plan Validity} Consider a planning task 
$\Pi = \tuple{X, A, I,G,C,E}$.
We denote with $\icond(\Pi)$ and $\ieff(\Pi)$ the sets of all plans/action-\ICs and plan/action-\IEs of $\Pi$, i.e.,
\begin{flalign}
    \icond(\Pi) &= C \cup \bigcup_{b \in A} \icond(b), \label{eq:icondPi} \\
    \ieff(\Pi) &= E \cup \bigcup_{b \in A} \ieff(b). \label{eq:ieffPi}
\end{flalign}
We say that two distinct \IEs $e$ and $e'$ 
\begin{enumerate}
    \item \textsl{interfere} if either both $e$ and $e'$ assign $v \in X$, or $x' := \psi' \in \eff(e')$ and $x := \psi \in \eff(e)$ with $\psi'$ containing $x$, and $x,x' \in V_N$, and
    \item are in \textsl{mutex} if $e$ interferes with $e'$ or $e'$ interferes with $e$.  
\end{enumerate} 
Let $e$ be an \IE and $c$ be an \IC. We say that $e$ and $c$ are in {\sl mutex} if $e$ assigns a variable $v$ and either $v = \top$, $v = \bot$, or $\psi \unrhd 0 $ with $\psi$ containing $v$, are in $ \cond(c)$.

A \textsl{timed durative action} is a tuple $\tuple{t, b, d}$ with  $t \in \rational^{> 0}$ being the \textsl{absolute time} in which the durative action $b = \tuple{\icond(b), \ieff(b), [L, U]}$ is executed, lasting a duration $d \in [L,U]$. A \textsl{temporal (numeric) plan} $\pi$ for $\Pi$ is a finite set of timed durative actions. The \textsl{make-span} of $\pi$ is $\ms(\pi) = \max(\set{t+d \mid \tuple{t,b,d} \in \pi})$. The \textsl{absolute \ICs} and \textsl{absolute \IEs of $\pi$}, denoted with $\icond(\pi)$ and $\ieff(\pi)$ respectively, are the sets
{\small
\begin{flalign}
    \icond(\pi) =\;& \set{\tuple{\tau^\vdash[t, t+d], \tau^\dashv[t, t+d], c} \mid \tuple{t,b,d} \in \pi, \tuple{\tau^\vdash, \tau^\dashv, c} \in \icond(b)} \nonumber\\
    \cup \; & \set{\tuple{\tau^\vdash[0, \ms(\pi)], \tau^\dashv[0, \ms(\pi)], c} \mid \tuple{\tau^\vdash, \tau^\dashv, c} \in C}, \label{eq:icond}\\
    \ieff(\pi) =\;& \set{\tuple{\tau[t,t+d], e} \mid \tuple{t,b,d} \in \pi, \tuple{\tau, e} \in \ieff(b)}\nonumber\\
    \cup\;& \set{\tuple{\tau[0, \ms(\pi)], e} \mid \tuple{\tau, e} \in E}, \label{eq:ieff}\\
    \text{ where } \tau[a, b] =\; & \begin{cases}
        a + k & \;\;\text{if } \tau = \tstart + k \text{ or } \tau = \talpha + k,\\
        b - k & \;\;\text{if } \tau = \tend - k \text{ or } \tau = \tomega - k.
    \end{cases}\label{eq:tau_a_b}
\end{flalign}
}
Intuitively, the sets are called "absolute" since \ICEs have the time expressed as absolute (i.e., in $\rational^{> 0}$) instead of relative (e.g., $\tstart + k$). In the following $\tau$ denotes a relative time and $t$ an absolute one.

Let $S$ be a state. For each linear expression $\psi$, let  $S(\psi)$ be the evaluation of the expression obtained by substituting each variable $x \in V_N$ in $\psi$ with $S(x)$. Let $\Gamma$ be a set of conditions. We say that a state $S$ {\sl satisfies} the conditions in $\Gamma$, denoted with $S \models \Gamma$ if for each $v = \top$, $w = \bot$, $\psi \unrhd 0 \in \Gamma$ we have $S(v) = \top$, $S(w) = \bot$, $S(\psi) \unrhd 0$, respectively. Let $e = \tuple{t, \eff(e)}$ be an absolute \IE. The {\sl state resulting from applying $e$ in $S$} is the state $S' = res(S,e)$ such that for each $v \in X$ we have $S'(v) = \top$ if $v := \top \in \eff(e)$,  $S'(v) = \bot$ if $v := \bot \in \eff(e)$,  $S'(v) = S(\psi)$ if $v := \psi \in \eff(e)$, and  $S'(v) = S(v)$ otherwise. 
Let $\pi$ be a plan, and let $\ieff(\pi)$ $ = \set{\tuple{t_1, \eff(e_1)}, \dots, \tuple{t_n, \eff(e_n)}}$ be the absolute effects of $\pi$ (Eq. \ref{eq:ieff}). The \textsl{parallel absolute effects of $\pi$}, is a sequence $\pieff(\pi) = \tuple{t_1, \eff(e_1)};\dots;\tuple{t_m, \eff(e_m)}$ where each $\eff(e_i)$ is obtained by joining the effects of all the \IEs in $\ieff(\pi)$ that are applied at the same time $t_i$. Clearly, $t_1 < \dots < t_m$ and $m \leq n$. Let $S_0$ be a state. The sequence $\pieff(\pi)$ applied from $S_0$ induces a sequence of \textsl{timed states} $S_0(\pi) = \tuple{t_0=0,S_0};\tuple{t_1,S_1}\dots;\tuple{t_m,S_m}$ where  $S_i = res(S_{i-1}, e_i)$ for $i \in [1,m]$. We say that $\pi$ is {\sl valid} iff 
\begin{enumerate}
    \item \textsl{initial condition}: $S_0$ is the initial state $I$.
    \item \textsl{goal condition}: $S_m \models G$,
    \item \textsl{intermediate conditions:} for each $c = \tuple{t^\vdash, t^\dashv, \cond(c)} \in \icond(\pi)$ and 
    \begin{enumerate}
        \item for $\tuple{t_0, S_0}$ starting $S_0(\pi)$, if $t^\vdash = t_0$, then $S_0 \models \cond(c)$,
        \item for each pair $\tuple{t_i, S_i},\tuple{t_{i+1}, S_{i+1}}$ in $S_0(\pi)$ such that either $t_i < t^\vdash \leq t_{i+1}$ or $t_i < t^\dashv \leq t_{i+1}$ it holds that $S_i \models \cond(c)$, 
        \item for $\tuple{t_m, S_m}$ ending $S_0(\pi)$, if $t_m < t^\dashv$, then $S_m \models \cond(c)$.
    \end{enumerate}
    Intuitively, if a condition holds in $t^\vdash = t_{i+1} = t^\dashv$ and an effect is applied in $t_{i+1}$, the condition has to be satisfied by the state before the application of the effect, i.e., $S_i$. Similarly, if a condition finishes to hold in $t^\dashv = t_{i+1}$, then it must be satisfied by $S_i$. If a condition holds from $t^\vdash$ to $t^\dashv$ then for each $\tuple{t_i, S_i} \in S_0(\pi)$  with $t^\vdash < t_i < t^\dashv$ it must hold on $S_i$. If a condition must hold in between two states $S_i$ and $S_{i+1}$, i.e., $t_i < t^\vdash \leq t^\dashv < t_{i+1}$ then the condition must be satisfied by $S_i$. Finally, if a condition must be satisfied at any time after $t_m$, then it must be satisfied by $S_m$.
    \item \textsl{no self-overlapping:} for each  $\tuple{t,b,d}, \tuple{t',b,d'}$ in $\pi$, i.e., the application of the same durative action $b$ at different times and with (possibly) different durations, it holds that either $t' +d' \leq t$ or $t' \geq t + d$,
    \item \textsl{$\epsilon$-separation:} let $\epsilon \in \rational^{\geq 0}$ and 
    let $e = \tuple{t, \eff(e)}, e' = \tuple{t', \eff(e')} \in \ieff(\pi)$ be two distinct \IEs in $\pi$. If $e$ and $e'$ are in mutex then  $|t - t'| \geq \epsilon$.
\end{enumerate}
We thus considered the standard notion of validity used, e.g., in \cite{Fox_Long_2003,RankoohG15,Haslum_pddl_2019,Panjkovic_Micheli_2024, Cardellini_Giunchiglia_2025}. In the case where $\Pi$ has no numeric variables, the no self-overlapping property ensures that the problem of deciding plan-existence for $\Pi$ is $\pspace$-complete, while allowing self-overlapping makes the problem $\textsc{expspace}$-complete or undecidable, depending on whether we impose $\epsilon$-separation or not \cite{gigante_decidability_2022}. In the case with numeric variables, the problem is undecidable even with $\epsilon$-separation \cite{Helmert_2002, gnad2023planning}. 

\subsection{Motivating Example in Temporal Numeric Planning with \ICEs}

We now model our \instradi motivating example presented in Section~\ref{sec:motivating} using the Temporal Numeric Planning with \ICEs formalism. 

Let's suppose we are in the setting presented in Fig. \ref{fig:initial-state}, in which there are two trains,  called \ttt{red} and \ttt{blue}. The \ttt{red} train $(i)$ is occupying platform $\ttt{II}$ and track circuits \ttt{114}, \ttt{115} and \ttt{116}, facing signal $\ttt{22}$, $(ii)$ will depart, according to the nominal timetable, in at least $30s$, and $(iii)$ must exit from \ttt{02}. The \ttt{blue} train $(i)$ is arriving in signal \ttt{03} in $5s$, $(ii)$ it must stop at some platform and departure in at least $100$s, and $(iii)$ exit from \ttt{02}. The \ttt{red} train must exit before the \ttt{blue} train. Suppose that in the interval in between the next $30s$ to $50s$ the track circuit $\ttt{102}$ is under maintenance. 

We can model such  \instradi problem with the planning task $\Pi = \tuple{X,A,I,G,C,E}$, where:
\begin{figure}
    \centering
    \includegraphics[width=\linewidth]{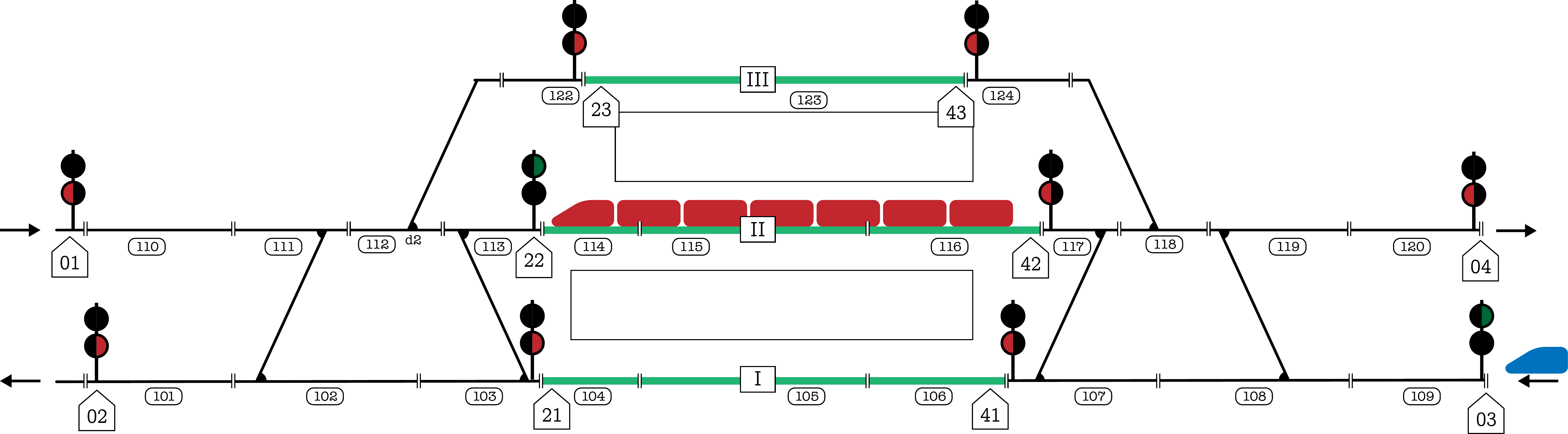}
    \caption{Initial state of our motivating example. The \ttt{red} train is occupying platform $\ttt{II}$. The \ttt{blue} train is arriving in $5s$ in \ttt{03}.}
    \label{fig:initial-state}
\end{figure}
\begin{description}
    \item The propositional variables in $X$ are\footnote{We present the variables in a “lifted” format to simplify the notation, but they have to be grounded for each train, route, signal and track circuit.}
    \begin{description}[labelsep=1pt]
        \item $occupied(c)$: track circuit $c$ is occupied by a train,
        \item $moved(\tau, r)$: train\footnote{In this paper, $\tau$ represents both a relative time and a train. The meaning will be clear from the context.} $\tau$ has moved through route $r$,
        \item $inFront(\tau, s)$: train $\tau$ is waiting in front of signal $s$,
        \item $green(\tau, s)$: signal $s$ is green for train $\tau$,
        \item $stopping(\tau, p)$: train $\tau$ is stopping on platform $p$,
        \item $stopped(\tau)$: train $\tau$ has stopped at a platform,
        \item $stoppedAt(\tau, p)$: train $\tau$ has stopped at platform $p$,
        \item $timetable(\tau)$: the departure time of train $\tau$ has passed,
        \item $\mathit{left}(\tau, s)$: train $\tau$ has left the station from signal $s$.
    \end{description}
    \item The numeric variables in $X$ are
    \begin{description}
        \item $\mathit{exitCounter}(s)$: counting the trains which have left from signal $s$,
        \item $order(\tau)$: denoting the priority of train $\tau$ as an integer.
    \end{description}
    \item The initial state $I$ has the following propositional variables set to true: 
    \begin{gather*}
        stoppedAt(\ttt{red}, \ttt{II}), 
        stopped(\ttt{red}), inFront(\ttt{red}, \ttt{22}),\\
        occupied(\ttt{114}), occupied(\ttt{115}), occupied(\ttt{116}).
    \end{gather*}
    All other propositional variables are set to false. For the numeric variables we have in $I$ that
    $I(\mathit{exitCounter}(\ttt{02})) = 0$, $I(\mathit{exitCounter}(\ttt{04})) = 0$, $I(order(\ttt{red})) = 0$ and $I(order(\ttt{blue})) = 1$, imposing that \ttt{red} must exit before \ttt{blue}.
    \item The goal $G$ is the set of conditions\footnote{To simplify the notation, we indicate $v = \top$ and $v := \top$ with $v$ and $w = \bot$ and $w := \bot$ with $\neg w$. The difference between conditions and effects will be clear from the context.}
    \begin{flalign*}
        \{&stopped(\ttt{red}), stopped(\ttt{blue}), \mathit{left}(\ttt{red} , \ttt{02}), \mathit{left}(\ttt{blue}, \ttt{02})\}.
    \end{flalign*}
    \item The set $C$ of plan-\ICs state that, in the maintenance period, the track circuit \ttt{102} must not be occupied, i.e.,
    $$\set{\tuple{\talpha + 30, \talpha + 50, \set{\neg occupied(\ttt{102})}}}.$$
    \item The set $E$ of plan-\IEs model $(i)$ the arrival of train \ttt{blue} in $5s$, $(ii)$ the departure time in $60$s and $100$s in the nominal timetable of \ttt{red} and \ttt{blue}, respectively
    \begin{flalign*}
        \{\tuple{\talpha + 5,& \set{inFront(\ttt{blue}, \ttt{03}), green(\ttt{blue}, \ttt{03})}},\\
        \tuple{\talpha + 30,& \set{timetable(\ttt{red})}},\\
        \tuple{\talpha + 100,& \set{timetable(\ttt{blue})}}\}
    \end{flalign*}
    \item The set of actions $A$ contains the following actions:
    \begin{description}
        \item $move(\tau, r)$ models the movement of train $\tau$ in route $r$. Let's take for example the action $b = move(\ttt{blue}, \ttt{03-23})$ to perform the movement of the blue arrow in Fig. \ref{fig:instradi}. Let's suppose that the forecast predicts that the blue train takes $30s$  to run through route \ttt{03-23}, and each track circuit in the route takes $5s$. Thus, the action $b = move(\ttt{blue}, \ttt{03-23})$ is modelled as $\tuple{\icond(b), \ieff(b), [30,30]}$ where
        \begin{equation*}
        \begin{array}{@{}r@{}r@{}l@{}l@{}l}
            \icond(b) = &\set{&\tuple{\tstart, \tstart, &\set{&inFront(\ttt{blue}, \ttt{03}),green(\ttt{blue}, 03), \\
            &&&&\neg  occupied(\ttt{109}), \neg occupied(\ttt{108}), \\
            &&&& \cdots, \\
            &&&& \neg occupied(\ttt{124}),\neg occupied(\ttt{123})}}}\\
            \eff(b) = &\set{&\tuple{\tstart, &\set{&\neg green(\ttt{blue}, 03), \neg inFront(\ttt{blue}, \ttt{03}),\\
            &&&& moved(\ttt{blue}, \ttt{03-23}), \\
            &&&& occupied(\ttt{109}), occupied(\ttt{108}), \\
            &&&& \cdots, \\
            &&&& occupied(\ttt{124}), occupied(\ttt{123})}},\\
            &&\tuple{\tstart + 5, &\set{&\neg occupied(\ttt{109})}},\\
            &&\tuple{\tstart + 10, &\set{&\neg occupied(\ttt{108})}},\\
            && \cdots \\
            &&\tuple{\tstart + 25, &\set{&\neg occupied(\ttt{124})}},\\
            &&\tuple{\tend, &\set{&inFront(\ttt{blue}, \ttt{23})}}
            }.
        \end{array}
    \end{equation*}
    Intuitively, before applying the action, the \ttt{blue} train must be in front of the green signal \ttt{03} and no track circuit of route \ttt{03-23} must be occupied by any other train. When the movement starts, the \ttt{03} signal is put to red, and we set all the track circuits of \ttt{03-23} as occupied, to guarantee safety. While the train moves, the \IEs of the action free the track segments (except the last one). When the action has ended and the movement completed, the train is now in front of signal \ttt{23}.
    \item $stop(\tau, s, p)$ models the stop of train $\tau$ facing signal $s$ of platform $p$. Let's take action $b = stop(\ttt{blue}, \ttt{23}, \ttt{III})$ and suppose that it is estimated that the blue train takes $35s$ to board/light passengers. Thus, the action is modelled as $\tuple{\icond(b), \ieff(b), [35,35]}$ where
    \begin{equation*}
        \begin{array}{@{}r@{}r@{}l@{}l@{}l}
            \icond(b) = &\set{&\tuple{\tstart, \tstart, &\set{&inFront(\ttt{blue}, \ttt{23}), \\
            &&&&\neg  stopped(\ttt{blue}), \neg stopping(\ttt{blue}, \ttt{II})}}}\\
            \eff(b) = &\set{&\tuple{\tstart, &\set{&stopping(\ttt{blue}, \ttt{II}}}\\
            &&\tuple{\tend, &\set{&stopped(\ttt{blue}), \neg stopping(\ttt{blue}, \ttt{II}),\\
            &&&&stoppedAt(\ttt{blue}, \ttt{III})}}
            }.
        \end{array}
    \end{equation*}
    \item $depart(\tau, p, r)$ models the departure of train $\tau$ from platform $p$ into route $r$. Let's take the action $b = depart(\ttt{blue}, \ttt{III}, \ttt{23-02})$. The action lasts as long as the time to free the platform's track circuits, let us suppose $10s$, and so is modelled as $\tuple{\icond(b), \ieff(b), [10,10]}$ where
    \begin{equation*}
        \begin{array}{@{}r@{}r@{}l@{}l@{}l}
            \icond(b) = &\set{&\tuple{\tstart, \tstart, &\set{&inFront(\ttt{blue}, \ttt{23}), \\
            &&&&stoppedAt(\ttt{blue}, \ttt{II}), \\
            &&&&timetable(\ttt{blue})}},\\
             &&\tuple{\tstart + 2\epsilon,\tstart + 2\epsilon, &\set{& moved(\ttt{blue}, \ttt{23-02})}}},\\
            \eff(b) = &\set{&\tuple{\tstart, &\set{&green(\ttt{blue}, \ttt{23})}}\\
            &&\tuple{\tend, &\set{&\neg occupied(\ttt{123})}}
            }.
        \end{array}
    \end{equation*}
    Intuitively, the train can depart only if it is in front of the stop signal, has already stopped, and the nominal timetable time has already passed. The departure action sets the signal to green and frees the platform's track circuits upon completion. The value $\epsilon$ in the \IC is the same $\epsilon$ chosen for the $\epsilon$-separation condition for plan validity in Section \ref{sec:ices-formal}. Its purpose is to force the $move$ action to be executed immediately after the $depart$ action starts: the only valid plan indeed is one where $depart(\ttt{blue}, \ttt{III}, \ttt{23-02})$ is executed at some time $t$ and the $move(\ttt{blue}, \ttt{23-02})$ action is executed immediately after, at $t + \epsilon$, so that at $t + 2\epsilon$ the condition $\set{moved(\ttt{blue}, \ttt{23-02})}$ is respected.
    \item $exit(\tau, r)$ models the exit from the station of train $\tau$ which has finished route $r$. The action is instantaneous and is only grounded for exit points, e.g., $a = exit(\ttt{blue}, \ttt{23-02})$. It is thus modelled as $\tuple{\pre(a), \eff(a)}$ where
    \begin{flalign*}
            \pre(a) = \set{&inFront(\ttt{blue}, \ttt{02}),
             exitCounter(\ttt{02}) = order(\ttt{blue})},\\
            \eff(a) = \set{&\mathit{left}(\ttt{blue}, \ttt{02}), \neg inFront(\ttt{blue}, \ttt{02}),\\ 
            &exitCounter(\ttt{02}) \pluseq 1, \neg occupied(\ttt{101})}.
    \end{flalign*}
    Intuitively, \ttt{blue} can exit from signal $\ttt{02}$ if it is in front of it and the priority is respected, then it is signalled that \ttt{blue} has left the station from \ttt{02}, and the last track circuit of \ttt{23-02} is freed.
\end{description}
\end{description}
\noindent A possible valid plan for $\Pi$, assuming $\epsilon = 0.001$, is
{\small
\begin{flalign}
    \pi = \set{& \tuple{5.001, move(\ttt{blue}, \ttt{03-21}), 30.0}, \tuple{35.002, stop(\ttt{blue}, \ttt{21}, \ttt{I}), 10.0},\nonumber \\
& \tuple{50.001, depart(\ttt{red}, \ttt{II}, \ttt{22-02}), 10.0}, \tuple{50.002, move(\ttt{red}, \ttt{22-02}), 25.0},\nonumber \\
& \tuple{75.003, exit(\ttt{red}, \ttt{22-02}), 0.0}, \tuple{100.001, depart(\ttt{blue}, \ttt{I}, \ttt{21-02}), 10.0},\nonumber \\
& \tuple{100.002, move(\ttt{blue}, \ttt{21-02}), 15.0}, \tuple{115.003, exit(\ttt{blue}, \ttt{23-02}), 0.0}}. \label{eq:plan-motivating}
\end{flalign}
}
In the plan, $\ttt{blue}$ enters as soon it arrives at the entry point $\ttt{03}$ at $t = 5$, through route $\ttt{03-21}$ and then it stops in platform $\ttt{I}$. Since $\ttt{red}$ must exit before \ttt{blue}, it is the first to depart from platform $\ttt{II}$. Since in $[30,50]$ the track circuit $\ttt{102}$ is under maintenance, -- and all the routes directed to \ttt{02} contains it -- even if it could depart in $t=30$, \ttt{red} departs one $\epsilon$ before $t = 50$ so that it can move through \ttt{22-02} -- and occupy the just maintained track circuit \ttt{102}-- as soon as possible. After, \ttt{red} has exited, \ttt{blue} can start to move towards the exit as soon as its depart time in $t=100$ has been reached.

\section{Symbolic Pattern Temporal Numeric Planning with \ICEs} \label{sec:approach}
In this section, we present the main contribution of this paper, i.e., a Symbolic Pattern Planning encoding for Temporal Numeric Planning with \ICEs. Firstly, in Section \ref{sec:rolling} we present the concept of \textsl{rolling} durative actions and when an action can be rolled, then, in Section \ref{sec:pattern} we explore the concept of a pattern and how it can be formally extended and computed in the presence of \ICEs. In Section \ref{sec:spp} we propose our Planning as Satisfiability encoding to deal with both the causal and temporal aspects of temporal planning with \ICEs. Finally, in Section \ref{sec:correctness-completeness} we prove the correctness and completeness of our approach. Throughout these sections, our \instradi motivating example will illustrate and ground our approach.

\subsection{Rolling Durative Actions} \label{sec:rolling}
The concept of \textsl{rolling} was firstly introduced by \citet{Scala_Haslum_Thiebaux_Ramirez_2016_AIBR} for Numeric Planning and then exploited in \spp both for Numeric Planning \cite{DBLP:conf/aaai/CardelliniGM24} and for Temporal Numeric Planning \cite{Cardellini_Giunchiglia_2025}. When actions have numeric effects, consecutively repeating the same action may lead to different states, depending on the number of repetitions. For example, the value of a numeric variable $x$ after the (possibly repeated) execution of an action having $x \pluseq 1$ in its effects depends on how many times the action was executed. In standard \pas approaches \cite{DBLP:conf/ausai/WehrleR07,balyo_relaxing_2013,  bofill_espasa_villaret_2016,Panjkovic_Micheli_2023,Panjkovic_Micheli_2024}, an action can be applied at most once per step, and thus, a plan with $r$ consecutive repetitions of the action will be found in a number of steps $n \geq r$. The \textsl{rolling} technique, allows modelling consecutive repetitions of the action in a single step, thus reducing the bound $n$.  This approach has been very beneficial in experimental analysis for both Numeric Planning and Temporal Numeric Planning.

Intuitively, in the rolling technique (which will see in detail later) we encode (or \textsl{roll}) the action execution via an integer variable, denoting how many times the action is (possibly) consecutively repeated. We can then express the action's effects as a function of this variable. Not all actions can, however, be rolled. Intuitively, a durative action $b$ can be consecutively executed more than once when $(i)$ its propositional \IEs do not disable the repetition of $b$, given its \ICs,  $(ii)$ the numeric \IEs do not interfere between themselves, and $(iii)$ it might be useful to execute $b$ more than once. Formally,  we say that 
$b = \tuple{\icond(b), \ieff(b), [L,U]}$ is {\sl eligible for rolling} if the following three conditions are satisfied:
\begin{enumerate}
        \item if $v := \top$ (resp. $v := \bot$) is in the effects of any \IE of $\ieff(b)$ then $v = \bot$ (resp. $v = \top$) is not in the conditions of any \IC of $\icond(b)$;
    \item if $x \asseq \psi$ is a numeric effect of any \IE in $\ieff(b)$, then 
    \begin{enumerate}
        \item $x$ does not occur in the r.h.s. of any other \IE in $\ieff(b)$,
        \item either $x$ does not occur in $\psi$ or $x \asseq \psi$ is a linear increment,
        \item if $x := \psi$ is not a linear increment, then another \IE assigning $x$ does not appear in any \IE of $b$,
     \end{enumerate}
    \item $\ieff(b)$ includes an \IE with a linear increment as effect (i.e., rolling is useful).
 \end{enumerate}
If $b$ is eligible for rolling, consecutively executing $b$ for $r\ge 1$ times:
\begin{enumerate}
    \item The total duration of the $r$ repetitions is in \begin{equation}
        [r \times L + (r-1) \times \epsilon_b, r \times U + (r-1) \times \epsilon_b], \label{eq:rolling-duration}
    \end{equation}
    where
    \begin{enumerate}
        \item $\epsilon_b = \epsilon$, if there exists an \IE $e = \tuple{\tau, \eff(e)}$ in $\ieff(b)$ with $\tau = \tend$ in mutex with either $(i)$ an \IC $c = \tuple{\tau^\vdash, \tau^\dashv, \cond(c)}$ in $\icond(b)$ with $\tau^\vdash = \tstart$, or $(ii)$ another \IE $e' = \tuple{\tau', \eff(e')}$ in $\ieff(b)$ with $\tau' = \tstart$,
        \item $\epsilon_b =0$ otherwise. 
    \end{enumerate}
    Such interval respects the $\epsilon$-separation property for plan validity if an \ICE in \tstart is in mutex with an \ICE in \tend.
    \item Causes $v$ to get value $(r \times \psi)$  if $v \pluseq \psi$ is a linear increment effect of any $\IE$ of $\ieff(b)$, while all the other  variables keep the value they get after the first execution of $b$.
\end{enumerate}
Notice that it is assumed that all the consecutive executions of $b$ have the same duration. Indeed, according to the semantics, the duration of $b$ can be arbitrarily fixed as long as each single execution respects the duration constraints, which are part of the domain specification. We will see later how this assumption does not affect the completeness of our encoding since a valid plan --assuming it exist-- can be found even if rolling is disabled, though in a greater or equal number of steps. 

\subsection{The Pattern} \label{sec:pattern}
The notion of pattern was introduced, originally, by \citet{DBLP:conf/aaai/CardelliniGM24} for Numeric Planning, where it was defined as a finite sequence of (snap) actions. Intuitively, its purpose was to suggest a causal order between actions, which a \pas planner can exploit to find a valid plan at lower bounds than any of the other  \pas approaches in the literature.  The notion of pattern is extended to Temporal Numeric Planning by \citet{Cardellini_Giunchiglia_2025}, where a pattern is defined as a finite sequence of snap actions representing durative actions' starts and ends. In this case, its purpose is to suggest both a casual and interleaving order between durative actions' starts and ends. Introducing \ICEs, we also need to account for the causal and interleaving order between action/plan-\ICEs. In Section \ref{sec:pattern-definition}, we formally define a pattern in the case of Temporal Numeric Planning with \ICEs. In Section \ref{sec:pattern-computation} we show how a pattern can be computed.

\subsubsection{Pattern Definition} \label{sec:pattern-definition}
Let $\Pi = \tuple{X, A, I, G, C, E}$ be a planning task. A {\sl pattern} $\pattern = h_1;\dots;h_k$, with length $k \geq 0$, is a finite sequence of \textsl{happenings} of $\Pi$. A happening $h$ is a tuple $\tuple{\cond(h), \eff(h)}$ where $\cond(h)$ is a (possibly empty) set of propositional or numeric conditions and $\eff(h)$ is a (possibly empty) set of propositional or numeric effects. Intuitively, we can model: $(i)$ an \IC $c = \tuple{\tau^\vdash, \tau^\dashv, \cond(c)}$ into a happening $h$ with $\cond(h) = \cond(c)$ and $\eff(h) = \emptyset$, and $(ii)$ an \IE $e = \tuple{\tau, \eff(e)}$ into a happening $h'$ with $\cond(h') = \emptyset$ and $\eff(h') = \eff(e)$. 

A happening could correspond to multiple \IC and \IE with the same conditions and effects, respectively. For these reasons, we perform the following two initial steps, which do not affect the generality of our approach:
\begin{enumerate}
    \item Whenever in $\icond(\Pi)$ there are two distinct \ICs $c_1 = \tuple{\tau^\vdash_1, \tau^\dashv_1, \icond(c_1)}$ and $c_2 = \tuple{\tau^\vdash_2, \tau^\dashv_2, \icond(c_2)}$ with $\tau^\vdash_1 = \tau^\vdash_2$, $\tau^\dashv_1 = \tau^\dashv_2$, and $\icond(c_1) = \icond(c_2)$, we break the identity by adding to the conditions of one of the two \ICs an always satisfied condition, like $k=k$ for some $k \in \rational$.
    \item Whenever in $\ieff(\Pi)$ there are two distinct \IEs $e_1 = \tuple{\tau_1, \eff(e_1)}$ and $e_2 = \tuple{\tau_2, \eff(e_2)}$ with $\tau_1 = \tau_2$ and $\eff(e_1) = \eff(e_2)$, we break the identity by changing an effect $x := \psi$ in one of the two \IEs with the equivalent $x := \psi + k - k$, for some $k \in \rational$.
    \end{enumerate}
With this disambiguation, given a happening $h$, in the following we will denote with $h=e$ if $h$ has the same effects of the \IE $e$ and with $h=c$ if $h$ has the same conditions of the \IC $c$. We will also say that $h \in \icond(\Pi)$ (resp. $h \in \ieff(\Pi)$) if there is a $c \in \icond(\Pi)$ (resp. $e \in \ieff(\Pi)$) such that $h=c$ (resp. $h=e$) and we naturally extend this to set operations. A pattern $\pattern = h_1;\dots;h_k$ thus intuitively represents the expected interleave of plan/action-\ICs and plan/action-\IEs. 

We remember that, by construction, in Sec \ref{sec:ices-formal}, we assumed to always have at least one \IC or \IE both at the start and at the end of $b$. Thus, slightly abusing notation, we will denote a happening $h$ in a pattern which represent the start or end of an action $b$ as $h = b^\vdash$ or $h = b^\dashv$, respectively.

A pattern is arbitrary, allowing for multiple occurrences of the same happening, even consecutively. 
Though the pattern can contain multiple occurrences of the same happening $h$, such occurrences will be treated as different copies of $h$, allowing us to treat each happening occurrence in the pattern
as a variable in our encoding, simplifying the notation and the presentation. 
For this reason, for each copy of the same happening $h$ in the pattern, we can replace it with a distinct copy $h'$, disambiguating each copy as previously discussed.
We can therefore take the happenings in the pattern to be the action variables in our encoding, and we can assume that a happening $h_i = b^\vdash$/$h_i = b^\dashv$  starts/ends exactly one durative action in $A$. 

We say that a pattern $\pattern = h_1;\dots;h_k$ is \textsl{complete} iff $(i)$ for each $c \in \icond(\Pi)$ (Eq. \ref{eq:icondPi}) there is at least one $i \in [1,k]$ such that $h_i = c$, and $(ii)$ for each $e \in \ieff(\Pi)$ (Eq. \ref{eq:ieffPi}) there is at least one $i \in [1,k]$ such that $h_i = e$.

In the \spp approach for temporal numeric planning without \ICEs \cite{Cardellini_Giunchiglia_2025}, a pattern was defined as a finite sequence of durative actions' start and ends. This is because, without \ICEs, conditions can be imposed and effects can be executed only at the start and at the end of actions and thus, what we now call happenings, coincide exactly with durative actions' start and ends.

\subsubsection{Pattern Computation} \label{sec:pattern-computation}
We now illustrate how to compute a pattern. In \citet{DBLP:conf/aaai/CardelliniGM24}, for Numeric Planning, the pattern was computed using the \textsl{Asymptotic Relaxed Planning Graph (\arpg)}, firstly introduced by \citet{Scala_Haslum_Thiebaux_Ramirez_2016_AIBR}. In this paper, we show how the \arpg can be employed also for Temporal Numeric Planning with \ICEs.

\paragraph{Relaxed State} Let $\Pi = \tuple{X, A, I, G, C, E}$ be a planning task with $X = \tuple{V_B, V_N}$. A \textsl{relaxed state} $\hat{S}$ is
a function mapping each propositional variable $v \in V_B$ to a subset of $\set{\bot, \top}$ and each numeric variable $x \in V_N$ to an interval 
$[\lb{x}, \ub{x}]$ with $\lb{x} \in \rational \cup \set{-\infty}$, $\ub{x} \in \rational \cup \set{+\infty}$, $\lb{x} \le \ub{x}$.  Intuitively, 
the set associated to each variable represents the values it can assume in the relaxed state. 
A relaxed state $\hat{S}$ can be extended to linear numeric expressions $\psi$ by defining  $\hat{S}(\psi)$ according to Moore's Interval Analysis \cite{moore2009introduction}:  
for  $x,y \in V_N$, $c \in \rational$,  $\hat{S}(x) = [\lb{x}, \ub{x}]$ and  $\hat{S}(y) = [\lb{y}, \ub{y}]$, we define:
\begin{gather*}
    \hat{S}(c) = [c, c], \qquad \hat{S}(x + c) = [\lb{x} + c, \ub{x} + c],\\
    \hat{S}(x + y) = [\lb{x} + \lb{y}, \ub{x} + \ub{y}], \qquad \hat{S}(x - y) = [\lb{x} - \ub{y}, \ub{x} - \lb{y}],\\
    \hat{S}(cx) = [\min(c \lb{x}, c  \ub{x}), \max(c  \lb{x}, c \ub{x})].
\end{gather*}
The \textsl{convex union} of two intervals $[\lb{x}, \ub{x}]$ and $[\lb{y}, \ub{y}]$ is denoted and computed as $[\lb{x}, \ub{x}] \sqcup [\lb{y}, \ub{y}] = [\min(\lb{x}, \lb{y}), \max(\ub{x}, \ub{y})]$.
The {\sl convex union of two relaxed states} $\hat{S}_1$ and $\hat{S}_2$ is the relaxed state $\hat{S} = \hat{S}_1 \sqcup \hat{S}_2$  such that
\begin{enumerate}
    \item for each $v \in V_B$, 
$\hat{S}(v) = \hat{S}_1(v) \cup \hat{S}_2(v)$, and 
\item for each $x \in V_n$, $\hat{S}(x) = \hat{S}_1(x) \sqcup \hat{S}_2(x)$. 
\end{enumerate}
Consider a relaxed state $\hat{S}$.
We say that $\hat{S}$ {\sl satisfies}
\begin{enumerate}
    \item {\sl a propositional condition} $v = \top$ if $\top \in \hat{S}(v)$, and $v = \bot$ if $\bot \in \hat{S}(v)$,
    \item {\sl a numeric condition} $\psi \unrhd 0$ if $\hat{S}(\psi) = [\lb{\psi}, \ub{\psi}]$ and $\ub{\psi} \unrhd 0$.
\end{enumerate}    
If $\hat{S}$ satisfies all the conditions in a set $\Gamma$, then $\hat{S}$ satisfies $\Gamma$, written $\hat{S} \models \Gamma$, 

 \paragraph{Executing a Sequence of Snap Actions in a Relaxed State}
Consider a single snap action $a = \tuple{\pre(a), \eff(a)}$ and a relaxed state $\hat{S}$. In order to define what is the relaxed state resulting from the execution of an action $a$ in a relaxed state, we consider the special case in which, for each numeric effect $x \asseq \psi \in \eff(a)$, either $\psi$ does not contain any variable assigned by an action in $A$ or $\psi$ is expressed as $(x + \varphi)$, with $\varphi$ a linear expression over $X$. Such an assumption is not a restriction, since \cite{Scala_Haslum_Thiebaux_Ramirez_2016_AIBR} $$\psi = \sum_{y \in V_N} k_y y + k = x + \varphi \quad\text{with}\quad \varphi = \sum_{y \in V_N, y \neq x} k_y y + (k_x - 1) x + k.$$ 

Consider a relaxed state $\hat{S}$.
An  action $a$ is {\sl  executable} (or {\sl applicable}) in $\hat{S}$ if $\hat{S}$ satisfies the preconditions of $a$. The result of executing $a$ in $\hat{S}$ is the relaxed state $\hat{S}' = res(\hat{S},a)$ such that:
\begin{enumerate}
    \item $\hat{S}'$ is undefined if $\hat{S} \not \models \pre(a)$.
    \item For each propositional variable $v \in X$, $(i)$ if $v := \top \in \eff(a)$, then $\hat{S}'(v) = \hat{S}(v) \cup \set{\top}$  $(ii)$ if  $v := \bot \in \eff(a)$, then $\hat{S}'(v) = \hat{S}(v) \cup \set{\bot}$ and $(iii)$ $\hat{S}'(v) = \hat{S}(v)$ otherwise. 
    \item For each numeric variable $x$ with $x \asseq \psi \in \eff(a)$,
    \begin{enumerate}
        \item 
         if $\psi$ does not contain a variable assigned by an action in $A$, then 
         \begin{equation}
             \hat{S}'(x) = \hat{S}(x) \sqcup \hat{S}(\psi), \label{eq:relaxed-convex-general-assignment}
         \end{equation}
        \item otherwise, if $\psi =  x + \varphi$, then $\hat{S}'(x) = [\lb{x}',\ub{x}']$, where, given $\hat{S}(x) = [\lb{x},\ub{x}]$ and $\hat{S}(\varphi) = [\lb{\varphi},\ub{\varphi}]$,
    \begin{flalign*}
        \lb{x}' = \begin{cases}
        - \infty & \text{if } \lb{\varphi} < 0,\\ 
        \lb{x} & \text{otherwise},
    \end{cases} \quad
    \ub{x}' = \begin{cases}
        +\infty & \text{if } \ub{\varphi} > 0,\\ 
        \ub{x}& \text{otherwise}.
    \end{cases}
    \end{flalign*}
    \item For each numeric variable $x$ not assigned by $a$, $\hat{S}'(x) = \hat{S}(x)$. 
    \end{enumerate} 
    
\end{enumerate}
Finally, let $\mathit{A}$ be a set of snap actions. The {\sl parallel execution} of the actions in $A$ results in the relaxed state 
$$\hat{S}' = res(\hat{S}, A) = \bigsqcup_{a \in A} res(\hat{S}, a).$$
Let $A_1;\dots;A_k$ be a sequence of snap actions' set. Applying the sequence from an initial relaxed state $\hat{S}_0$ induces a $k+1$-long sequence $\hat{S}_0;\hat{S}_1;\dots;\hat{S}_k$ such that $\hat{S}_{i} = res(\hat{S}_{i-1}, A_{i})$ for $i \in \set{1,2,\dots,k}$. Thus, the resulting state applying the sequence to $\hat{S}_0$ is $\hat{S}_k$, i.e., $res(\hat{S}_0, A_1;\dots;A_k) = \hat{S}_k$.

\newcommand{\AICEs}[1]{\textsc{aiceSeq}(#1)}
\newcommand{\AIEs}[1]{\textsc{aieSeq}(#1)}
\newcommand{\PICEs}[1]{\textsc{piceSeq}(#1)}

\renewcommand{\TH}{\mathit{TH}}
\newcommand{\CTH}{\mathit{CTH}}

\paragraph{Compilation of Durative Actions into Snap Actions} Let $\Pi = \tuple{X,A,I,G,C,E}$ be a planning task. Before computing the \arpg, we first compile the planning task $\Pi$ in a planning task $\snapPi$, denoted as the \textsl{snap planning task} of $\Pi$. Intuitively, the compilation to $\snapPi$ transforms all $\ICs$ and $\IEs$ in snap (instantaneous) actions, removing the temporal aspects from the planning task, allowing us to make use of the \arpg computation already used by \citet{DBLP:conf/aaai/CardelliniGM24} and \citet{Scala_Ramirez_Haslum_Thiebaux_2016_Rolling}.

For each durative action $b = \tuple{\icond(b), \ieff(b), [L,U]} \in A$, we can construct a set of \textsl{timed happenings of $b$}. A timed happening is a tuple $\tuple{t^\vdash, t^\dashv, h}$ with $t^\vdash, t^\dashv \in \rational^{> 0}$ absolute times and $h$ a happening. Intuitively, the set can be constructed as if $b$ lasted $L$ (Eq. \ref{eq:tau_a_b}), i.e.,
    \begin{flalign}
        \TH^b &= \set{\tuple{\tau^\vdash[0,L], \tau^\dashv[0,L], \tuple{\cond(c), \emptyset}} \mid c = \tuple{\tau^\vdash, \tau^\dashv, \cond(c)} \in \icond(b)}\nonumber\\ 
        & \cup ~ \set{\tuple{\tau[0,L], \tau[0,L], \tuple{\emptyset, \eff(e)}} \mid e = \tuple{\tau, \eff(e)} \in \ieff(b)}. \label{eq:aices-order}
    \end{flalign}
    From the set $\TH^b$, we perform the following operations:
    \begin{enumerate}
        \item Firstly, for each pair of timed happenings of $b$ scheduled at the same time, i.e., $\tuple{t, t, \tuple{\cond(h_1), \emptyset}}$ and $\tuple{t, t, \tuple{\emptyset, \eff(h_2)}}$, we modify the latter to be $\tuple{t, t, \tuple{\cond(h_1), \eff(h_2)}}$. In this way, the happening associated to an $\IE$ at time $t$ can be applied only if the happening of the $\IC$ at time $t$ is also applied;
        \item Then, from the modified $\TH^b$ we construct a sequence of pairwise disjoint set of combined timed happenings $\TH^b_1;\dots;\TH^b_p$, such that $\bigcup_{i=1}^p \TH^b_i = \TH^b$, obtained by partitioning the set $\TH^b$ into subsets where, for each $\TH^b_i$, each $\tuple{t^\vdash, t^\dashv, h} \in \TH^b_i$ has the same\footnote{In a single set $\TH^b_i$ we thus have the \ICs of $b$ with the same $t^\vdash$ but with different $t^\dashv$.} $t^\vdash$ and order the sets by $t^\vdash$ into the sequence $\TH^b_1;\dots;\TH^b_p$;
        \item Finally, from the sequence $\TH^b_1;\dots;\TH^b_p$ we obtain the sequence of set of happenings $H^b_1;\dots;H^b_p$ -- denoted as \AICEs{b}, for \textsl{action \ICEs} -- by \textsl{projection}, i.e., $H_i^b = \set{h \mid \tuple{t^\vdash, t^\dashv, h} \in \TH_i^b}$.
    \end{enumerate}

\newcommand{\exec}{\ttt{exec}}
\newcommand{\Exec}{\ttt{Exec}}
\noindent Let $\Pi = \tuple{X, A, I, G, C, E}$ be a planning task. The planning task $\snapPi$ is then $\tuple{X_\snap, A_\snap, I_\snap, G_\snap, \emptyset, \emptyset}$ where:
\begin{enumerate}
    \item $X_\snap$ contains the variables of $X$, adding 
    \begin{enumerate}
        \item a propositional variable $\exec(h)$  for each \ICE $h \in \icond(\Pi) \cup \ieff(\Pi)$, and
        \item a numeric variable $\ttt{time}$. 
    \end{enumerate}
    \item The set of snap actions $A_\snap$ is constructed as follows: 
    \begin{enumerate}
        \item Let $\AICEs{b} = H_1^b;\dots;H_p^b$ be the sequence of sets of happenings of action $b$, as discussed previously. We can construct the set of snap actions $A_\snap^{\AICEs{b}}$ as the union of the sets, for $i \in [1,p]$,
        \begin{gather}
            \set{\tuple{\cond(h) \cup \Exec(H_{i-1}^b), \eff(h) \cup \set{\exec(h)}} \mid h \in H_i^b}, \label{eq:snap-compilation-action}
        \end{gather}
        where $H_0^b = \emptyset$, and $\Exec(H_{i-1}^b)$ is the abbreviation for 
        $$\Exec(H_{i-1}^b) = \set{\exec(h) \mid h \in H_{i-1}^b}.$$
        Intuitively, $A_\snap^\AICEs{b}$ contains one snap action for each \ICE of $b$, with the same conditions and effects, and where the variables $\exec(\cdot)$ impose the order given by $\AICEs{b}$;
        \item Let $M$ be an integer, larger than the maximum offset of the \ICEs in $C \cup E$, we can construct the set of actions $A_\snap^M$ as 
        \begin{gather}
            A_\snap^M = \set{\tuple{\set{\ttt{time} = i-1}, \set{\ttt{time} := i}} \mid i \in [1, M]}. \label{eq:snap-compilation-M}
        \end{gather}
        Intuitively, in the relaxed initial state $\hat{I}_\snap$ we will have $\hat{I}_\snap(\ttt{time}) = [0,0]$ and each action in $A_\snap^M$ will "move the time forward", with the convex union of Eq. \ref{eq:relaxed-convex-general-assignment}, increasing the interval of $\hat{I}_\snap(\ttt{time})$;
        \item We denote as $A_\snap^{C,E}$ the set containing, for each \IC $c = \tuple{\tau^\vdash, \tau^\dashv, \cond(c)} \in C$ and each \IE $e = \tuple{\tau, \eff(e)} \in E$, the snap actions
        \begin{flalign}
            \tuple{\cond(c) \cup \set{\ttt{time} = \tau^\vdash[0,M]}, \set{\exec(c)}}, \label{eq:snap-compilation-plan} \\
            \tuple{\set{\ttt{time} = \tau[0,M]}, \set{\exec(e)} \cup \eff(e)} \nonumber, 
        \end{flalign}
        respectively, where $M$ is the value described at the previous item. Intuitively, the set $A_\snap^{C,E}$ contains, for each plan-\IC and plan-\IE, an associated snap action with the same conditions and effects, executable only if the variable \ttt{time} has reached the value of time associated to the plan-\IC or plan-\IE, as if the plan lasted $M$. The $\exec(\cdot)$ variable signals the \IC or \IE has been satisfied or executed.
    \end{enumerate}
    
    Then the action in $A_\snap$ are 
    \begin{flalign}
        A_\snap = \bigcup_{b \in A} A_\snap^{\AICEs{b}} \cup A_\snap^M \cup A_\snap^{C,E}.
     \label{eq:snap-durative-2}
    \end{flalign}
    \item $s_\snap$ is the initial state $I$ extended to have $I_\snap(\ttt{time}) = 0$.
    \item $G_\snap$ is the set 
    \begin{equation}
        G_\snap = G \cup \set{\exec(h) \mid h \in C \cup E}, \label{eq:gsnap}
    \end{equation}
    ensuring that all the plan-\ICs and plan-\IEs have been satisfied and executed in the goal state.
    \item There are no plan-\ICs and plan-\IEs.
\end{enumerate}

\newcommand{\supporters}[1]{\Call{Sup}{#1}}
\newcommand{\action}[1]{\Call{SAct}{#1}}

To compute\footnote{See \cite{Aldinger_Mattmüller_Göbelbecker_2015,Scala_Haslum_Thiebaux_Ramirez_2016_AIBR} for an analysis of why supporters are necessary when computing the \arpg.} the resulting relaxed state when different actions have \textsl{interfering numeric effects} (e.g., $x := \psi$ and $y := \psi'$ with $y$ in $\psi$ and $x$ in $\psi'$) \citet{Scala_Haslum_Thiebaux_Ramirez_2016_AIBR} introduced a compilation of an action to its \textsl{supporters}. Let $a$ be an action, the set of supporters of $a$ is the set of actions
\begin{flalign*} 
    \supporters{a} = &~\set{\tuple{\pre(a) \cup \set{x < \psi}, \set{x:= \psi}}\mid x := \psi \in \eff(a)}\\
    \cup &~\set{\tuple{\pre(a) \cup \set{x > \psi}, \set{x:= \psi}} \mid x := \psi \in \eff(a)}\\ 
    \cup &~\set{\tuple{\pre(a), \set{x := \top \in \eff(a)} \cup \set{x := \bot \in \eff(a)}}}.
\end{flalign*}

Intuitively, for each action and for each effect $x:=\psi \in \eff(a)$ we split the action in two: one in the case $x < \psi$ and one in the case $x > \psi$. We also add a supporter containing all the propositional effects of $a$. If $a' \in \supporters{a}$, we denote by $\action{a'}$ the action $a$. If $A$ is a set of actions, then $\supporters{A} = \bigcup_{a \in A} \supporters{a}$. 

\newcommand{\aleft}{A_\mathrm{left}}
\begin{algorithm}[t]
\caption{\textsc{ComputeSnapARPG}. \\
Input: the planning task $\Pi = \tuple{X,A,I,G,C,E}$. \\
Output: an Asymptotic Relaxed Planning Graph (\arpg). \label{alg:compute-arpg}
}

{\small
\begin{algorithmic}[1]
\Function{$\textsc{ComputeSnapARPG}$}{$\Pi$}
    \State $\arpg \gets \epsilon$ 
    \State $\tuple{X_\snap, A_\snap, I_\snap, G_\snap, \emptyset, \emptyset} \gets \snapPi$ \label{alg:compute-arpg:tosnap}
    \State $\aleft \gets \supporters{A_\snap}$ \label{arg:compute-arpg:sup}
    \State $\hat{S} \gets \Call{Relax}{I}$
    \While{\textsc{True}} \label{alg:compute-arpg:while}
        \State $A_\layer \gets \set{a \mid a \in \aleft, \text{$\hat{S} \models \pre(a)$}}$ \label{alg:compute-arpg:alayer}
        \If{$A_\layer = \emptyset$} \label{alg:compute-arpg:first-empty}
            \State \Return $\arpg$ \label{alg:compute-arpg:return-arpg}
                    
        \EndIf
        \State $A_\layer' \gets \set{\action{a} \mid a \in A_\layer} \setminus \set{\action{a} \mid a \in \supporters{A_s} \setminus \aleft}$ \label{alg:compute-arpg:originating-action}
        \State $\arpg \gets \arpg;A'_\layer$ \label{alg:compute-arpg:concat}
        \State $\hat{S} \gets res(\hat{S}, A_\layer)$ \label{alg:compute-arpg:res}
        \State $\aleft \gets \aleft \setminus A_\layer$ \label{alg:compute-arpg:asnap-minus-alayer}

    \EndWhile
\EndFunction
\end{algorithmic}
}
\end{algorithm}

\paragraph{\arpg computation for Temporal Planning} Let $\Pi = \tuple{X,A,I,G,C,E}$ be a planning task. We now show how to compute an \textsl{Asymptotic Relaxed Planning Graph (\arpg)} for $\Pi$. Formally, an \arpg is a sequence of \textsl{layers} $A_1;\dots;A_l$ of length $l$. A layer is a set of snap actions. Intuitively, an \arpg provides a partial causal order between snap actions, suggesting that an action in layer $A_p$ must happen before a action in layer $A_q$, if $p < q$. The \arpg for a temporal planning task $\Pi$ can thus be constructed from $\snapPi$, as showed in Algorithm \ref{alg:compute-arpg}:
\begin{enumerate}
    \item Firstly, we initialize $(i)$ the $\arpg$ as the empty sequence $\epsilon$, $(ii)$ we compile $\Pi$ in $\snapPi$, $(iii)$ we denote as $\aleft$ all the supporters of $A_\snap$, and $(iv)$ we initialize $\hat{S}$ as the relaxation of $I$.
    \item Then, in Line \ref{alg:compute-arpg:alayer} we denote as $A_\layer$ all the (supporter) snap actions in $A_\snap$ which are applicable in the relaxed state $\hat{S}$.
    \item If the layer is not empty, we move to Line \ref{alg:compute-arpg:originating-action}, $(i)$ adding to the \arpg the layer containing all the actions of the supporters in $A_\layer$ which have not been inserted before in the \arpg, $(ii)$ updating the state $\hat{S}$, $(iii)$ removing the used actions in $A_\layer$ from $\aleft$, and then starting again from Line \ref{alg:compute-arpg:alayer} and the point above.
    \item If instead, in Line \ref{alg:compute-arpg:first-empty}, $A_\layer$ is empty, it means that either $(i)$ we have no more actions in $A_\snap$ or $(ii)$ none of them are applicable in $\hat{S}$. Thus, we end the algorithm and return the \arpg.
\end{enumerate}

One of the properties of the \arpg for numeric planning, is that if, the final relaxed state induced by the \arpg doesn't satisfy the goal, then it doesn't exist a valid plan for the planning task. We will prove that this is also true, in some cases, for temporal numeric planning with \ICEs. To explain in which cases this property holds, we introduce the concept of \textsl{well-orderability}. Let $b = \tuple{\icond(b), \ieff(b), [L,U]}$ be a durative action. \textsl{An action $b$ is well-orderable} if $(i)$ the duration is fixed, i.e., $L=U$, or $(ii)$ we have that $\max(K_b^\tstart) + \max(K_b^\tend) < L$, with $K_b^\tstart$ and $K_b^\tend$ defined as
\begin{equation}
\begin{array}{@{}r@{}r@{}l}
K^\tstart_b &= &\set{k \mid \tuple{\tau^\vdash, \tstart + k, \cond(c)} \in \icond(b)}\\
   & \cup &\set{k \mid \tuple{\tstart + k, \tau^\dashv, \cond(c)} \in \icond(b)} \\
   & \cup & \set{k \mid \tuple{\tstart + k, \eff(e)} \in \ieff(b)},\\
K^\tend_b &= &\set{k \mid \tuple{\tau^\vdash, \tend - k, \cond(c)} \in \icond(b)}\\
   & \cup &\set{k \mid \tuple{\tend - k, \tau^\dashv, \cond(c)} \in \icond(b)} \\
   & \cup & \set{k \mid \tuple{\tend - k, \eff(e)} \in \ieff(b)},
\end{array}\label{eq:K_b}
\end{equation}
i.e., the sets of the offsets of the happenings of $b$ with time relative from the start and end, respectively.
Intuitively, a durative action is well-orderable if the order of its \ICEs is independent of its duration.

Let $\Pi = \tuple{X,A,I,G,C,E}$ be a planning task. We denote by $K^\talpha_\Pi$ and $K^\tomega_\Pi$ the set of the offsets of $C \cup E$ such that: 
\begin{equation*}
\begin{array}{@{}r@{}r@{}l}
K^\talpha_\Pi ~&= &~\set{k \mid \tuple{\tau^\vdash, \talpha + k, \cond(c)} \in C}\\
   & \cup &~\set{k \mid \tuple{k, \tau^\dashv, \cond(c)} \in C} \\
   & \cup &~\set{k \mid \tuple{\talpha + k, \eff(e)} \in E}),\\
K^\tomega_\Pi ~&= &~\set{k \mid  \tuple{\tau^\vdash, \tomega - k, \cond(c)} \in C}\\
   & \cup &~\set{k \mid \tuple{\tomega - k, \tau^\dashv, \cond(c)} \in C} \\
   & \cup &~\set{k \mid \tuple{\tomega - k, \eff(e)} \in E}).
\end{array}
\end{equation*}
\textsl{A planning task $\Pi$ is well-orderable} if both $(i)$ all durative actions in $A$ are well-orderable, and $(ii)$ either $K^\talpha_\Pi = \emptyset$ or $K^\tomega_\Pi = \emptyset$. Intuitively, condition $(ii)$ guarantee that, in any plan, the order of plan-\ICs in $C$ and plan-\IEs in $E$ is independent of the length of the plan.

\begin{theorem} \label{thm:arpg-plan-existance}
    Let $\Pi$ be a well-orderable planning task. If it doesn't exist a valid plan for $\snapPi$, then it doesn't exist one for $\Pi$.

    \begin{proof}
        Let $\snapPi = \tuple{X_\snap, A_\snap, I_\snap, G_\snap, \emptyset, \emptyset}$. For contrapositive, we prove that if a valid plan for $\Pi$ exists, then there exists a valid plan for $\snapPi$. Let $\pi$ be a valid plan for $\Pi$. From $\pi$ we can construct a plan $\pi_\snap$ for $\snapPi$ as 
        \begin{gather*}
            \pi_\snap = \set{\tuple{t^\vdash, a_c} \mid \tuple{t^\vdash, t^\dashv, c} \in \icond(\pi)} \cup \set{\tuple{t, a_e} \mid \tuple{t, e} \in \ieff(\pi)}
        \end{gather*}
        where $\icond(\pi)$ and $\ieff(\pi)$ are the set described in Eqs. \ref{eq:icond} and \ref{eq:ieff} and $a_c$ and $a_e$ are the snap actions in $A_s$ of $\snapPi$ corresponding to the happening $c$ and $e$ respectively, according to Eqs. \ref{eq:snap-compilation-action}. It is clear, by construction and since $\pi$ is a valid plan for $\Pi$, that the sequences of states $I(\pi)$ and $\hat{I}_\snap(\pi_\snap)$ coincide, if we remove from each state in $\hat{I}_\snap(\pi_\snap)$ the additional $\exec(\cdot)$ and $\ttt{time}$ variables in $X_\snap$. We now prove that $\pi_\snap$ is a valid plan for $\snapPi$:
        \begin{enumerate}
            \item \textsl{initial condition}: By construction of $I_\snap$.
            \item \textsl{goal condition}: Recalling Eq. \ref{eq:gsnap}, the goal $G_\snap$ is composed of $(i)$ the original goal $G$ of $\Pi$, and $(ii)$ the goal $\set{\exec(h) \mid h \in C \cup E}$. The first part is respected  by construction of $\pi_\snap$. The second part of the goal is respected because $(i)$ each \IC in $C$ is satisfied by the states in $I(\pi)$, and thus, by construction, is respected\footnote{Notice that while in $\pi$ an \IC $\tuple{t^\vdash, t^\dashv, \cond(c)}$ must respect $\cond(c)$ from $t^\vdash$ to $t^\dashv$ (extrema included), in $\pi_\snap$ we have to respect it only on $t^\vdash$.} by $I(\pi_s)$, $(ii)$ each plan-$\IE$ in $E$ is present in $\ieff(\pi)$ and thus is in $\pi_\snap$ by construction, and $(iii)$ being $\Pi$ well-orderable, the order of the plan-\ICs and plan-\IEs' in $\pi$ corresponds with the order of the associated actions $A_\snap^M$ in $\pi_\snap$, since $M$ is larger than the maximum offset of the \ICEs in $C \cup E$, and having that either $K^\talpha_\Pi = \emptyset$ or $K^\tomega_\Pi = \emptyset$. Thus, by all the $\exec(h)$ with $h \in C \cup E$ must be respected in $G_\snap$.
            \item \textsl{intermediate conditions}: For each $c \in \icond(\Pi)$ we have a snap action $a_c \in A_\snap$ (Eqs. \ref{eq:snap-compilation-action} and \ref{eq:snap-compilation-plan}) where $\pre(a_c) = \cond(c) \cup \Gamma_c$. The conditions in $\cond(c)$ are respected by construction of $\pi_s$. $\Gamma_c$ can either $(i)$ concern the variable $\ttt{time}$ if $c$ is a plan-\IC or $(ii)$ concern some variables $\exec(\cdot)$ if $c$ is an action-\IC. The case $(i)$ was already covered by the previous item. For case $(ii)$, by hypothesis, $\Pi$ is well-orderable, i.e., for each action $b \in \Pi$ the order of happenings in $\AICEs{b}$ and the ordering of the corresponding actions in $\pi_\snap$ coincides. The order causes the satisfaction of the added conditions concerning $\exec(\cdot)$. 
            \item \textsl{self-overlapping} and \textsl{$\epsilon$-separation}: by construction of $\pi_\snap$, being $\pi$ valid for $\Pi$ and being all the actions in $A_\snap$ instantaneous.
        \end{enumerate}
        \vspace{-25pt}
    \end{proof}
\end{theorem}

\begin{corollary} \label{cor:compute-snap-arpg}
    Let $\Pi = \tuple{X,I,A,G,C,E}$ be a well-ordered task, $\hat{S} = \Call{Relax}{I}$ and $\arpg = \Call{ComputeSnapARPG}{\Pi}$. If $res(\hat{S}, \arpg) \not \models G$ then it doesn't exist a valid plan for $\Pi$.
    \begin{proof}
        The procedure $\Call{ComputeSnapARPG}{\Pi}$ in Alg. \ref{alg:compute-arpg} corresponds to constructing an \arpg for $\Pi_\snap$. It was already proved by \citet{Scala_Haslum_Thiebaux_Ramirez_2016_AIBR} that if the final relaxed state induced by an \arpg doesn't satisfy the goal, then it doesn't exist a valid plan for $\Pi_\snap$. Since it doesn't exist a plan for $\Pi_\snap$, it doesn't exist a valid plan for $\Pi$ (Thm. \ref{thm:arpg-plan-existance}).
    \end{proof}
\end{corollary}

\paragraph{From an \arpg to a Pattern} Let $\Pi = \tuple{X,A,I,G,C,E}$ be a planning task. The \arpg obtained from the procedure $\Call{ComputeSnapARPG}{\Pi}$ is a sequence of layers $A_1;\dots;A_l$ of length $l$, where each $A_i$ contains actions of $\snapPi$. From this sequence, we obtain a sequence of sets of happenings of $\Pi$, i.e., $H_1;\dots;H_l$, by swapping each snap action with its originating happening in Eqs. \ref{eq:snap-compilation-action} and \ref{eq:snap-durative-2} and not considering the snap actions in $A_\snap^M$ (Eq. \ref{eq:snap-compilation-M}). A pattern $\pattern = h_1;\dots;h_k$ is then obtained from $H_1;\dots;H_l$ by \textsl{linearization}, i.e., constructing $\pattern$ such that for each pair of happenings $h_i \neq h_j$ in $\pattern$, $h_i$ comes before $h_j$ in $\pattern$ $(i)$ if $h_i \in H_p$ and $h_j \in H_q$ with $1 \leq p < q \leq l$, else $(ii)$ if both $h_i, h_j \in H_p$ with $p \in [1,l]$, if $h_i$ is an \IC and $h_j$ is an \IE \footnote{By the semantic, we know that, if scheduled at the same time, conditions must be checked before effects are applied; thus it doesn't make sense to have \IEs before \ICs in $\prec$ if there is a possibility they are executed at the same time, e.g., they are of the same action.}. All other ties are break arbitrarily, e.g., lexicographically.
\newcommand{\Hleft}{H_\mathrm{left}}

Let $H_1;\dots;H_l$ be a sequence of set of happenings obtained by the \arpg constructed by $\Call{ComputeSnapARPG}{\Pi}$. Let 
$$\Hleft = \left(\icond(\Pi) \cup \ieff(\Pi) \right) \setminus \bigcup_{i=1}^l H_i$$
be the set of happenings of $\Pi$ which do not appear in $H_1;\dots;H_l$. We now analyse the completeness of the pattern obtained from linearization:
\begin{enumerate}
    \item If $\Hleft$ is empty, then $\prec$ is complete, by definition.
    \item Otherwise, we consider whether $\Pi$ is well-orderable or not:
    \begin{enumerate}
        \item If $\Hleft$ is not empty and $\Pi$ is well-orderable, let $h \in \Hleft$ and let $a_h = \tuple{\pre(a_h), \eff(a_h)}$ be the snap action in $\snapPi$ obtained from $h$ (Eq. \ref{eq:snap-compilation-action}). Since $a_h$ has not been included in the \arpg, it means that it doesn't exist a plan $\pi_\snap$ of $\snapPi$ that can satisfy, in the final state, $\pre(a_h)$. Thus, due to Thm. \ref{thm:arpg-plan-existance}, it doesn't exist a valid plan $\pi$ of $\Pi$ that contains the happening $h$. Thus, even if $\prec$ doesn't contain the happenings in $\Hleft$, and thus incomplete for $\Pi$, we will still consider it complete, by removing from $\Pi$ all actions containing the happenings in $\Hleft$.
        \item If $\Hleft$ is not empty and $\Pi$ is not well-orderable, then we cannot rely on Thm. \ref{thm:arpg-plan-existance} and Cor. \ref{cor:compute-snap-arpg}. In fact, if in Line \ref{alg:compute-arpg:alayer} of the procedure $\Call{ComputeSnapARPG}{\Pi}$ we have $A_\layer = \emptyset$ it could be that none of the actions left in $\aleft$ are applicable due either to the $\exec$ variables, i.e., the order imposed by $\AICEs{b}$ for some $b \in A$ or the $\ttt{time}$ variable. Thus, to obtain a complete pattern, we have to linearize
        $H_1;\dots;H_l;\Hleft$, which contains all the happenings of $\Pi$ by construction.
    \end{enumerate}
\end{enumerate}

\subsubsection{Pattern in the Motivating Example} 
Let $\Pi = \tuple{X,A,I,G,C,E}$ be the motivating example planning task. All the actions in $A$ are well-orderable because they all have a fixed duration. Even if the duration was not fixed in $[L,U]$, we would have that, for each action $b$ of the motivating example $K_b^\tend = \set{0}$, and thus $\max(K_b^\tstart) < L$ by definition. The planning task $\Pi$ is thus well-orderable because all the actions are well-orderable and $K_\Pi^\tomega = \emptyset$.

\newcommand{\tS}{\textsc{s}\xspace}
\newcommand{\tE}{\textsc{e}\xspace}
\newcommand{\tA}{\textsc{a}\xspace}
\newcommand{\tO}{\textsc{o}\xspace}

\begin{table}[]
\centering
\resizebox{\textwidth}{!}{%
{\footnotesize
\begin{tabular}{|c|c|c|c|}
\hline
$\ttt{time} = 5$&$\ttt{time} = 6$&$\ttt{time} = 7$&$\ttt{time} = 8$\\
\makecell{E[A+5]}&\makecell{move(b,03-21)[S,S]\\move(b,03-23)[S,S]\\move(b,03-21)[S]\\move(b,03-23)[S]}&\makecell{move(b,03-21)[S+5]\\move(b,03-23)[S+5]}&\makecell{move(b,03-21)[S+10]\\move(b,03-23)[S+10]}\\\hline
\hline
$\ttt{time} = 9$&$\ttt{time} = 10$&$\ttt{time} = 11$&$\ttt{time} = 12$\\
\makecell{move(b,03-21)[S+15]\\move(b,03-23)[S+15]}&\makecell{move(b,03-21)[S+20]\\move(b,03-23)[S+20]}&\makecell{move(b,03-21)[S+25]\\move(b,03-23)[S+25]}&\makecell{move(b,03-21)[E]\\move(b,03-23)[E]}\\\hline
\hline
$\ttt{time} = 13$&$\ttt{time} = 14$&$\ttt{time} = 30$&$\ttt{time} = 31$\\
\makecell{stop(b,21,I)[S,S]\\stop(b,23,III)[S,S]\\stop(b,21,I)[S]\\stop(b,23,III)[S]}&\makecell{stop(b,21,I)[E]\\stop(b,23,III)[E]}&\makecell{C[A+30,A+50]\\E[A+30]}&\makecell{depart(r,II,22-02)[S,S]\\depart(r,II,22-02)[S]}\\\hline
\hline
$\ttt{time} = 32$&$\ttt{time} = 33$&$\ttt{time} = 34$&$\ttt{time} = 35$\\
\makecell{move(r,22-02)[S,S]\\move(r,22-02)[S]}&\makecell{depart(r,II,22-02)[S+$2\epsilon$,S+$2\epsilon$]\\move(r,22-02)[S+5]}&\makecell{depart(r,II,22-02)[E]\\move(r,22-02)[S+10]}&\makecell{move(b,03-22)[S,S]\\move(r,22-02)[S+15]\\move(b,03-22)[S]}\\\hline
\hline
$\ttt{time} = 36$&$\ttt{time} = 37$&$\ttt{time} = 38$&$\ttt{time} = 39$\\
\makecell{move(b,03-22)[S+5]\\move(r,22-02)[S+20]}&\makecell{move(b,03-22)[S+10]\\move(r,22-02)[E]}&\makecell{exit(r,22-02)[S,S]\\move(b,03-22)[S+15]\\exit(r,22-02)[S]}&\makecell{move(b,03-22)[S+20]}\\\hline
\hline
$\ttt{time} = 40$&$\ttt{time} = 41$&$\ttt{time} = 42$&$\ttt{time} = 43$\\
\makecell{move(b,03-22)[S+25]}&\makecell{move(b,03-22)[S+30]}&\makecell{move(b,03-22)[S+35]}&\makecell{move(b,03-22)[E]}\\\hline
\hline
$\ttt{time} = 44$&$\ttt{time} = 45$&$\ttt{time} = 100$&$\ttt{time} = 101$\\
\makecell{stop(b,22,II)[S,S]\\stop(b,22,II)[S]}&\makecell{stop(b,22,II)[E]}&\makecell{E[A+100]}&\makecell{depart(b,I,21-02)[S,S]\\depart(b,I,21-02)[S]\\depart(b,II,22-02)[S,S]\\depart(b,II,22-02)[S]\\depart(b,III,23-02)[S,S]\\depart(b,III,23-02)[S]}\\\hline

\hline
$\ttt{time} = 102$&$\ttt{time} = 103$&$\ttt{time} = 104$&$\ttt{time} = 105$\\
\makecell{move(b,21-02)[S,S]\\move(b,22-02)[S,S]\\move(b,23-02)[S,S]\\move(b,21-02)[S]\\move(b,22-02)[S]\\move(b,23-02)[S]}&\makecell{depart(b,I,21-02)[S+$2\epsilon$,S+$2\epsilon$]\\depart(b,II,22-02)[S+$2\epsilon$,S+$2\epsilon$]\\depart(b,III,23-02)[S+$2\epsilon$,S+$2\epsilon$]\\move(b,21-02)[S+5]\\move(b,22-02)[S+5]\\move(b,23-02)[S+5]}&\makecell{depart(b,I,21-02)[E]\\depart(b,II,22-02)[E]\\depart(b,III,23-02)[E]\\move(b,21-02)[S+10]\\move(b,22-02)[S+10]\\move(b,23-02)[S+10]}&\makecell{move(b,21-02)[E]\\move(b,22-02)[S+15]\\move(b,23-02)[S+15]}\\\hline

\hline
$\ttt{time} = 106$&$\ttt{time} = 107$&&\\
\makecell{exit(b,21-02)[S,S]\\exit(b,22-02)[S,S]\\exit(b,23-02)[S,S]\\move(b,22-02)[S+20]\\move(b,23-02)[S+20]\\exit(b,21-02)[S]\\exit(b,22-02)[S]\\exit(b,23-02)[S]}&\makecell{move(b,22-02)[E]\\move(b,23-02)[E]}&&\\\hline
\end{tabular}
}}
\caption{\arpg computed for the \instradi motivating example. $b[\tau^\vdash, \tau^\dashv]$ represents the \IC $c = \tuple{\tau^\vdash, \tau^\dashv, \cond(c)} \in \icond(b)$ and $b[\tau]$ represents the \IE $e = \tuple{\tau, \eff(e)} \in \ieff(b)$. For space reasons, we denoted \tstart, \tend and \talpha with S,E and A respectively. We only show layers of the \arpg where there are snap actions other than the one in $A_\snap^M$ making \ttt{time} flow.}
\label{tab:motivating-arpg}
\end{table}
Let $b$ be an action, we denote with $b[\tau^\vdash, \tau^\dashv]$ the happening associated with the \IC $\tuple{\tau^\vdash, \tau^\dashv, \cond(c)} \in \icond(b)$ and with $b[\tau]$ the happening associated with the \IE $\tuple{\tau, \eff(e)} \in \ieff(b)$. For brevity, we abbreviate $\tstart$ with $\tS$, $\tend$, with $\tE$, $\talpha$ with $\tA$, and $\tomega$ with $\tO$ For example, the action $b = move(\ttt{blue}, \ttt{03-23})$ has the sequence $\AICEs{b}$ equal to
\begin{flalign*}
    \set{b[\tS, \tS];b[\tS]};\set{b[\tS + 5]};\set{b[\tS + 10]};\cdots;\set{b[\tS+25]};\set{b[\tE]}.
\end{flalign*}
The happenings $b[\tS, \tS]$ and $b[\tS]$ have the same condition, being part of the same set $H_1^b$. The snap actions in $A_\snap$  associated with the happenings $b[\tS, \tS]$ and $b[\tS]$ have, in their effects $\exec(b[\tS, \tS]) := \top$ and $\exec(b[\tS]) := \top$, respectively. The snap action in $A_\snap$ associated with $b[\tS + 5]$ has the condition $\set{\exec(b[\tS, \tS]) = \top, \exec(b[\tS]) = \top}$ to impose the order of $\AICEs{b}$.

We denote with $C[\tau^\vdash, \tau^\dashv]$ a plan-\IC $\tuple{\tau^\vdash, \tau^\dashv, \cond(c)} \in C$ and with $E[\tau]$ a plan-\IE $\tuple{\tau, \eff(e)} \in E$. In the motivating example, we can chose the value $M$ to be $101$ since the maximum offset of the \ICEs in $C \cup E$ is $100$. We can thus compile, e.g., the happening $E[\tA + 60]$ into the snap action
\begin{flalign*}
    \tuple{\set{\ttt{time} = 60}, \set{timetable(\ttt{red}), \exec(E[\tA + 60])}}
\end{flalign*}
In $\hat{S}_0 = \Call{Relax}{I}$ the only snap action applicable is $\tuple{\set{\ttt{time} = 0}, \set{\ttt{time} :=1}}$ moving time forward. When $\ttt{time}$'s interval becomes $[0,5]$, then $E[\tA + 5]$ is applicable, i.e., the plan-\IE which causes \ttt{blue} to enter the station. Thus $A_5 = \set{E[\tA + 5], \tuple{\set{\ttt{time} = 5}, \set{\ttt{time} :=6}}}$. In $\hat{S}_6 = res(\hat{S}_5, A_5)$ we have thus $inFront(\ttt{blue}, \ttt{03}) \in \set{\top, \bot}$ and $green(\ttt{03}) \in \set{\top, \bot}$ caused the snap action associated with $E[\tA + 5]$ and $\ttt{time} \in [0,6]$. Table \ref{tab:motivating-arpg} shows how the \arpg computation continues. Each cell represents a layer of the \arpg. We put in the header of each cell the value of \ttt{time}, and we skip the cells where the only snap actions applied are the one making the interval of $\ttt{time}$ increase. The pattern obtained by linearization is, for example, the one where the happenings in each layer of Table \ref{tab:motivating-arpg} is read from top to bottom. Notice that, in the same layer, we order $\ICs$ before \IEs.

\subsection{The Symbolic Pattern Planning Encoding} \label{sec:spp}
We now describe the \spp encoding for Temporal Numeric Planning with \ICEs. Let $\Pi = \tuple{X,A,I,G,C,E}$ be a planning task and let $\prec = h_1;\dots;h_k$ be a pattern of $\Pi$, as described in Sec.\ \ref{sec:pattern-definition}. The \textsl{pattern-encoding} of $\Pi$ using the pattern $\pattern$ is the formula
\begin{gather}
    \Pi^\pattern = \mI(\mX) \land \mS^\pattern(\mX, \mH^\pattern, \mX') \land \mT^\pattern(\mX, \mH^\pattern, \mC^\pattern) \land \mG(\mX'), \label{eq:pattern-encoding}
\end{gather}
where
\begin{enumerate}
    \item $\mX = X$ are\footnote{We employ the calligraphic font to denote sets and formulae belonging to the encoding.} the \textsl{current state variables} and $\mX'$ are the \textsl{next state variables}, which is a copy of the variables in $\mX$, i.e. $\mX' = \set{v' \mid v \in \mX}$,
    \item $\mH^\pattern$ contains a variable with domain in $\natural^{\geq 0}$ for each happening $h_i$ in $\pattern$, denoting how many times $h_i$ is (possibly) repeated (i.e., rolled),
    \item $\mC^\pattern$ contains the \textsl{clock} information, i.e.,
    \begin{enumerate}
        \item a variable $t_i \in \rational^{\geq 0}$ for each $h_i$ in $\pattern$, denoting the time associated to either the start of an \IC or the execution of an \IE,
        \item a variable $t_i^\dashv \in \rational^{\geq 0}$ for each \IC $h_i$ in $\pattern$, denoting the time the \IC ends,
        \item a variable $d_i \in \rational^{\geq 0}$ for each $h_i = b^\vdash$ in $\pattern$, denoting the duration of the durative action $b$, and
        \item a single variable $ms \in \rational^{\geq 0}$ denoting the make-span of the  plan,
    \end{enumerate}
    \item $\mI(\mX)$ is the \textsl{initial state formula} over the variables in $\mX$, i.e., 
    \begin{gather*}
        \bigand_{v: I(v) = \top} v \land \bigand_{w: I(w) = \bot} \neg w \land \bigand_{x,k: I(x) = k} x = k,
    \end{gather*}
    \item $\mS^\pattern(\mX, \mH^\pattern, \mX')$ is the \textsl{causal transition relation}, characterizing the effects of the applications of the \IEs and the respect of the \ICs (discussed in Sec.\ \ref{sec:spp-causal}),
    \item $\mT^\pattern(\mX, \mH^\pattern, \mC^\pattern)$ is the \textsl{temporal transition relation}, characterizing the temporal relations between the durative actions and their happenings, self-overlapping and the $\epsilon$-separation, (discussed in Sec.\ \ref{sec:spp-temporal}),
    \item $\mG(\mX')$ is the \textsl{goal formula} obtained as.
    $$\mG(\mX') = \bigand_{v = \top \in G} v' \land \bigand_{w = \bot \in G} \neg w' \land \bigand_{\psi \unrhd 0 \in G} \psi' \unrhd 0,$$
    where $v'$, $w'$ and $\psi'$ are obtained substituting in $v$, $w$ and $\psi$, respectively, each variable in $X$ with the corresponding one in $\mX'$.
\end{enumerate}

\begin{algorithm}[t]
{\footnotesize
\caption{\spp algorithm. \label{alg:spp}
Input: a temporal planning problem with \ICEs~$\Pi= \tuple{X,A,I,G,C,E}$.
Output: a valid plan for $\Pi$.
}\label{alg:f}
\begin{algorithmic}[1]
\Function{\spp}{$\Pi$} 
    \State $\pattern_g \leftarrow \epsilon$, $P \leftarrow \emptyset$ 
    \State $\pattern_h \leftarrow \textsc{ComputePattern}(I, \Pi)$ \label{alg:f:pattern-h-init}
    \While{\textsc{True}}
        \State $\pattern_f \leftarrow \pattern_g; \pattern_h$ \label{alg:f:concat}
        \State $\mu \leftarrow \textsc{MaxSolve}(
        \Pi^{\pattern_f}) \land \mG(\mX'),G,P)$ \label{alg:f:maxsolve} 
        \If{$|\textsc{SatG}(\mu, G)| = |G|$} \label{alg:f:G}
            \State\!\!\!\!\Return $\textsc{GetPlan}(\mu, \pattern_f)$
        \ElsIf{$|\textsc{SatG}(\mu, G)| > |P|$} \label{alg:f:moreP}
            \State\!\!\!\!$\pattern_g \leftarrow \textsc{Compress}(\pattern_f, \mu)$ 
            \label{alg:f:simplify} 
            \State\!\!\!\!$P \leftarrow \textsc{SatG}(\mu, G)$
            \State\!\!\!\!$s \leftarrow \textsc{GetState}(I,\textsc{GetPlan}(\mu, \pattern_f))$
            \State\!\!\!\!$\pattern_h \leftarrow \textsc{ComputePattern}(s,\Pi)$ \label{alg:f:pattern-h}
        \Else
            \State\!\!\!\!$\pattern_g \leftarrow \pattern_f$ \label{alg:f:else}
        \EndIf
    \EndWhile
\EndFunction
\end{algorithmic}
}
\end{algorithm}

Algorithm \ref{alg:spp} shows the \spp approach used to find a valid model for $\Pi$, taken from \cite{DBLP:conf/kr/CardelliniG25} where it was dubbed $\patty_{dc}$:
\begin{enumerate}
    \item $\Call{ComputePattern}{s, \Pi= \tuple{X,A,I,G,C,E}}$ $(i)$ constructs the \arpg calling the procedure ${\Call{ComputeSnapARPG}{\tuple{X,A,s,G,C,E}}}$ (i.e., where the initial state is replaced with $s$) as shown in Alg. \ref{alg:compute-arpg} of Sec.\ \ref{sec:pattern-computation}, and $(ii)$ performs the linearization and translations from the snap actions of the \arpg.
    \item $\textsc{MaxSolve}(\Pi^{\pattern_f},G,P)$
        calls a \textsc{max-smt} solver returning  an assignment satisfying $\mI(\mX) \land \mS^\pattern(\mX, \mH^{\pattern_f}, \mX') \land \mT^{\pattern_f}(\mX, \mH^{\pattern_f}, \mC^{\pattern_f})$, all the goals in $P$, and a maximal subset of the goals in $G \setminus P$.%
\item $\textsc{GetState}(I,\pi)$ returns the state resulting from the execution of the sequence $\pi$ of actions from $I$.
\item $\textsc{SatG}(\mu, G)$ returns the set of goals in $G$ satisfied by the assignment $\mu$.
\item $\textsc{Compress}(\pattern_f, \mu)$ returns the substring of $\pattern_f = h_1;\dots;h_k$ by selecting the $h_i$ with $\mu(h_i) > 0$ for $i \in [1,k]$. Intuitively, it returns the part of $\pattern_f$ which is "helpful" in reaching $s$.
\item $\Call{GetPlan}{\mu, \pattern}$ returns the plan $\pi$ such that $(i)$ for each happening $h_i$ in $\pattern$ with $\mu(h_i) > 0$ and modeling the start of a durative action $b$, (i.e., $h_i = b^\vdash$), and $(ii)$ for each $r \in \set{1, \dots, \mu(h_i)}$, i.e., each repetition of $b$, we have in $\pi$ the timed durative action
    \begin{flalign*}
        \tuple{\mu(t_i) + (r-1) \times (\mu(d_i) + \epsilon_b), b, \mu(d_i)} ,
    \end{flalign*}
    where $t_i,d_i \in \mC^\pattern$ are the time and duration associated with $h_i$, and $\epsilon_b$ is as described in Eq. \ref{eq:rolling-duration}.
\end{enumerate}
In the \spp procedure, before the search starts, we
 assign $\pattern_g$ to the empty pattern ($\epsilon$), the set $P$ 
(meant to contain the subset of goals that can be satisfied with $\pattern_g$) to the empty set, and we compute an initial pattern $\pattern_h$ from the initial state. Then,
\begin{enumerate}
    \item we set the  pattern $\pattern_f$ used for the search to  $\pattern_g;\pattern_h$ (Line~\ref{alg:f:concat}) and then check whether all the goals in $G$ are satisfied (Line~\ref{alg:f:G}) and, 
    \item 
    if not, we check whether $\pattern_f$ allows satisfying at least one more goal (Line~\ref{alg:f:moreP}), in which case
    we 
    \begin{enumerate}
        \item set $\pattern_g$ to the value of $\textsc{Compress}(\pattern_f, \mu)$
        \item update the set $P$ to the new subset of satisfied goals, 
        \item update the intermediate state,
        \item recompute $\pattern_h$ from the intermediate state $s$ using the \arpg, and 
    \item restart the loop thereby concatenating the newly computed $\pattern_h$ at the next iteration,
    \end{enumerate}
    \item otherwise, (Line~\ref{alg:f:else}) we set $\pattern_g$  to $\pattern_f$, thereby concatenating $\pattern_h$ once more at the next iteration.
\end{enumerate}

As stated in Sec \ref{sec:pattern}, a pattern $\pattern_h = h_1;\dots;h_k$ suggests a causal order between the happenings, i.e., if $h_i$ comes before $h_j$ in $\pattern_h$ (i.e., $i <j$), then $h_i$ \emph{may} cause $h_j$. The conditional is required here because if $h_i$ and $h_j$ are not in mutex, then there is no interference between $h_i$ and $h_j$ and thus $h_j$ could also happen before $h_i$ without breaking the causal relationship suggested by the pattern. If, instead, $h_i$ and $h_j$ are in mutex, the pattern suggests that we should impose that $t_i < t_j$. If the suggestion by the pattern is wrong and every valid plan must contain $h_i$ after $h_j$ (i.e., $|\Call{SatG}{\mu, G}|$ is not increased), then by assigning $\pattern_g \gets \pattern_f$ (which is equal to $\pattern_h$, since at the beginning $\pattern_g \gets \epsilon$ and $\pattern_f \gets \pattern_g;\pattern_h$) in Line \ref{alg:f:else} of Alg. \ref{alg:spp} we obtain, by the concatenation at Line \ref{alg:f:concat}, a pattern $$\pattern_f \gets \underbrace{h_1;\dots;h_i;\dots;h_j;\dots;h_k}_{\pattern_g};\underbrace{h_{k+1}\dots;h_{k+i};\dots;h_{k+j};\dots;h_{2k}}_{\pattern_h}$$ of length $2k$, where $h_{k+i}$ and $h_{k+j}$ are (distinct) copies of $h_i$ and $h_j$, respectively. Thus, we can find a plan having $t_j < t_{k+i}$, i.e., $h_j$ is before $h_{k+i}$ which is a copy of $h_i$.
For this reason, we have split the transition relation in two: 
\begin{enumerate}
    \item $\mS^\pattern(\mX, \mH^\pattern, \mX')$ considers the (state) causal relationship between the happenings of $\pattern$.
    \item $\mT^\pattern(\mX, \mH^\pattern, \mC^\pattern)$ considers the temporal relationship between the happenings of $\pattern$.
\end{enumerate}

\subsubsection{The Causal Transition Relation}  \label{sec:spp-causal}
Let $\Pi = \tuple{X,A,I,G,C,E}$ be a planning task and let $\pattern = h_1;\dots;h_k$ be a pattern. 

\paragraph{Computing the Resulting State} The novelty of patterns lies in the ability to determine the value of $v \in X$ before the (possible multiple) application of each happening in $\pattern$, solely by the initial value of $v$ (determined by $I$) and the application or not of the happenings before $h_i$, i.e., in the subsequence $h_1;\dots;h_{i-1}$ of $\pattern$. We proceed inductively. For each $v \in \mX$, we initialize $\sigma_0(v) = v$ (which we remember is initialized by the initial state formula $\mI(\mX)$). Then for each $i \in [1,k]$ we inductively define $\sigma_i(v)$ as
\begin{enumerate}
    \item $\sigma_i(v) = (\sigma_{i-1}(v) \lor h_i > 0)$ if $v := \top \in \eff(h_i)$, i.e. $v$ is true if it was true before or $h_i$ is executed,
    \item $\sigma_i(v) = (\sigma_{i-1}(v) \land h_i = 0)$ if $v := \bot \in \eff(h_i)$, i.e., $v$ is true if it was true before and $h_i$ is not executed,
    \item $\sigma_i(v) = \sigma_{i-1}(v) + h_i \times \sigma_{i-1}(\psi)$ if $v \pluseq \psi \in \eff(h_i)$ is a linear increment,
    \item $\sigma_i(v) = \ite(h_i > 0, \sigma_{i-1}(\psi), \sigma_{i-1}(v))$ if $v := \psi \in \eff(h_i)$ is not a linear increment,
    \item $\sigma_i(v) = \sigma_{i-1}(v)$ otherwise.
\end{enumerate}
The function $\ite(c, t, e)$, standard in \textsc{smtlib} \cite{BarFT-SMTLIB}, returns $t$ or $e$ depending on whether $c$ is true or not. The value of $\sigma_{i}(\psi)$ is obtained by replacing each $v$ appearing in $\psi$ with $\sigma_i(v)$. Let $\Gamma$ be a set of conditions, we denote with $\sigma_i(\Gamma)$ the formula
\begin{flalign*}
    \sigma_i(\Gamma) = \bigand_{v = \top \in \Gamma} \sigma_i(v) \land \bigand_{w = \bot \in \Gamma} \neg \sigma_i(w) \land \bigand_{\psi \unrhd 0 \in \Gamma} \sigma_i(\psi) \unrhd 0.
\end{flalign*}

\begin{example*}
    Let $h_1 = \tuple{\set{v = \top}, \set{v := \bot, x \pluseq 1}}$, $h_2 = \tuple{\set{v = \bot, x > 4}, \set{x := 5, v := \top}}$, $h_3 = \tuple{\set{x > 5}, \set{x \minuseq 4}}$ and let $\pattern = h_1;h_2;h_3$, then
    \begin{flalign*}
        \sigma_3(v) &= (v \land h_1 = 0) \lor h_2 > 0,\\
        \sigma_3(x) &= \ite(h_2 > 0, x + h_1, 5) - 4h_3.
    \end{flalign*}
    The first row can be read as "after $h_3$, $v$ is true if either $(i)$ $v$ was true in $I$ and $h_1$ is not executed, or $(ii)$ $h_2$ is executed".
\end{example*}

\paragraph{Intermediate Conditions and Rolling} Let $b = \tuple{\icond(b), \ieff(b), [L,U]}$ be a durative action. To correctly encode the rolling of $b$, we have to check if the \ICs in $\icond(b)$ hold at both the first execution of $b$ and, also, after the (possible) following repetitions of $b$. It is clear that the state in which we check the \ICs can depend on the order in which the \IEs in $\ieff(b)$ happens. In the first repetition of $b$, the order of the \IEs is restricted by how the happenings are chosen in the pattern. In the subsequent (possible) repetitions instead, we want to have a formula that represents how the state evolves after each happening by the possible subsequent repetition of $b$. Thus, to construct this formula, we need to know a priori the order of the \IEs of $b$. For this reason, we restrict rolling only to those actions which are both eligible for rolling and well-orderable. The well-orderability condition guarantees that the order given by $\AICEs{b}$ is respected in any valid plan. Let $\psi$ be a linear expression and let $h$ be a happening of $b$. We denote by $\psi[r, b]$ the value of $\psi$ after $r$ repetitions of $b$ up to the happening $h$ in $\AICEs{b}$. Let $\AICEs{b} = H_1;\dots;H_p$ be the order of the happenings of $b$. We recall that there cannot be two \IE with the same offset, i.e. in the same $H_i$ with $i \in [1,p]$, and that $b$ is well orderable. Thus, we can denote with $\AIEs{b}$ the sequence of \IEs of $b$ taken from $\AICEs{b}$ by sequencing all the \IEs in the sets $H_1,\dots,H_p$. Let $\AIEs{b} = h_1;\dots;h_m$ with $m \leq p$. For each $x$ in $\psi$ and for each $h_j$ with $j \in \set{1,\dots,m}$ we define 
\begin{gather*}
    \delta x_j = \begin{cases}
        \psi & \text{if $x \pluseq \psi \in \eff(h_j)$},\\
        0 & \text{otherwise}.
    \end{cases}
\end{gather*}
The expression $\psi[r, b]$ is obtained from $\psi$ by substituting each $x$ in $\psi$ with 
\begin{enumerate}
    \item $\psi'$ if $x := \psi' \in \eff(h_j)$ in some \IE $h_j$, or
    \item otherwise with $x + (r-1) \times \sum_{j=1}^m \delta x_j$.
\end{enumerate}

\begin{example*}
    Let $b$ be a durative action and let $\AIEs{b} = h_1;h_2;h_3;h_4$ with $\eff(h_1) = \set{x \pluseq k_1}$, $\eff(h_2) = \set{x \minuseq k_2}$, $\eff(h_3) = \set{y \pluseq k_3}$, $\eff(h_4) = \set{z := 2}$. Let $\psi$ = $x + y -z$, then 
    $$\psi[r,b] = \underbrace{(x + (r-1)\times(k_1 - k_2))}_{x} + \underbrace{(y + (r-1)\times k_3)}_y - \underbrace{2}_z$$
\end{example*}

\begin{theorem}\label{thm:rolling}
    Let $c$ be a \IC with $\cond(c) = \set{\psi \unrhd 0}$ of a durative action $b$ eligible for rolling and well-orderable. Let $\AICEs{b} = h_1;\dots;h_i;\dots;h_m$, with $h_i = c$ and with all other happenings being \IEs. Let $s$ be the state before applying $b$. The condition $c$ holds for $r \geq 1$ consecutive repetitions of $b$ in $s$ if and only if 
        $$res(s, h_1;\dots;h_{i-1}) \models \psi[r, b] \unrhd 0.$$
\end{theorem}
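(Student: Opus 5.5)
We argue by induction on the number of repetitions $r$, and first reduce the statement to a claim about the value of every variable at the moment $c$ is checked. Because $b$ is well-orderable, every execution of $b$ performs its \IEs in the order $\AIEs{b}$. So within the $j$-th repetition, $c$ is checked after exactly $h_1;\dots;h_{i-1}$ of that repetition, preceded by $j-1$ complete executions of $\AIEs{b}$. Denote by $s_j$ the state in which $c$ is checked in the $j$-th repetition. The condition holds for all $r$ repetitions iff $s_j \models \psi \unrhd 0$ for every $j \le r$. We read the statement as saying that $s_r$ is obtained from $s^\ast = res(s, h_1;\dots;h_{i-1})$ by substituting each $x$ in $\psi$ as in the definition of $\psi[r,b]$. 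We first establish this substitution lemma, and then handle the quantification over all $j \le r$.

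\textbf{Substitution lemma.} We prove it by induction on $r$, one variable $x$ occurring in $\psi$ at a time, using the rolling-eligibility conditions.
\begin{enumerate}
\item If $x$ is assigned a non-increment $x := \psi'$ by some \IE of $b$, then condition 2(c) makes this the only \IE of $b$ assigning $x$. Condition 2(a) ensures no other \IE of $b$ has $x$ on its right-hand side. Condition 2(b), together with $x := \psi'$ not being an increment, ensures $x \notin \psi'$. For the substitution to be meaningful, the variables of $\psi'$ must also not be modified by $b$; we take this from 2(a), applied to the variables occurring in $\psi'$. Hence after the first execution of that \IE, $x$ is frozen at the value of $\psi'$, and for $r \ge 2$ its value at $s_r$ is $\psi'$. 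This is the first substitution case. We must also check the case where the assigning \IE occurs after position $i$ in the first repetition; then $s_1$ still has the old value of $x$. This is a subtlety, and we will handle it by noting that the definition uses $\psi'$ uniformly. If that fails for $r=1$, we restrict the claim to $r \ge 2$, or observe that $\psi[1,b]$ should read $\psi$.
\item Otherwise, $x$ is only incremented by \IEs of $b$, each by some $\delta x_j$. By 2(a) the right-hand sides $\delta x_j$ do not contain variables modified by $b$, so they are constant across repetitions. Between $s_{j}$ and $s_{j+1}$, exactly one full cycle of $\AIEs{b}$ happens, shifted cyclically. It therefore adds $\sum_{j} \delta x_j$, and by induction $s_r(x) = s^\ast(x) + (r-1)\sum_j \delta x_j$.
\end{enumerate}
Condition 1 of eligibility guarantees that propositional effects never falsify propositional conditions of $b$. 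The propositional part of $c$ therefore holds in every repetition iff it holds in $s^\ast$. For this reason the theorem only concerns the numeric $\psi$.

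\textbf{The ``for all $j \le r$'' issue.} The expression $s_j(\psi)$ is affine in $j$ for $j \ge 2$: the frozen variables are constant and the incremented ones grow linearly. The value at $j=1$ also equals $s^\ast(\psi)$, consistent with the uniform formula whenever no non-increment assignment happens after position $i$. An affine function satisfying $\unrhd 0$ at both endpoints $j=1$ and $j=r$ satisfies it at every intermediate $j$. So the condition holding for all $r$ repetitions is equivalent to it holding at $j=1$ and $j=r$.

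We expect this interval step to be the main obstacle. The statement as given mentions only $\psi[r,b]$. We will therefore try to show that the $j=1$ check is implied by the encoding elsewhere, namely by the first occurrence of $c$ being checked on $\sigma_{i-1}$. Failing that, we read the ``if'' direction as being combined with that first-occurrence check.
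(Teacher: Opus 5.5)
Your route is the paper's. You check the first repetition at $s^\ast = res(s,h_1;\dots;h_{i-1})$, note that one cyclically shifted pass of $\AIEs{b}$ happens between consecutive checks, and use linearity in $r$ of $\psi[r,b]$ so two endpoint checks cover every repetition; you are more explicit than the paper, which only cites monotonicity. However, the two points you leave open are exactly where the statement breaks, and your proposed ways out do not close them.

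First, the $j=1$ check is not implied by $s^\ast \models \psi[r,b] \unrhd 0$. Take a single \IC $x \ge 0$ at $\tstart$, a single \IE $x \pluseq 1$ at $\tend$, fixed duration, $s(x)=-1$ and $r=2$. Then $\psi[2,b] \ge 0$, i.e.\ $x+1 \ge 0$, holds at $s^\ast$, yet the first repetition already violates $c$. So the ``if'' direction is false as written. What the paper's proof actually argues (its first sentence imposes the first-repetition check) is the conjunction of the two checks. This is also what $\op{conditions}^\pattern(A,C)$ encodes: $h_i>0 \implies \sigma_{i-1}(\cond(c))$ alongside the $\psi[h_i,b]$ check. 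Drop the attempt to derive the first check and prove that conjunction.

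Second, a non-increment assignment to a variable of $\psi$ placed after $h_i$ is a genuine failure, not a technicality. Restricting your substitution lemma to $r \ge 2$ is fine. But then the checked values are $s^\ast(\psi)$ at $j=1$ followed by the affine sequence $s^\ast(\psi[j,b])$ for $j\ge 2$, so the endpoints $j=1$ and $j=r$ say nothing about $j=2$. Redefining $\psi[1,b]$ as $\psi$ only destroys the affinity you need. Concretely, let $b$ have fixed duration $2$, the \IC $x+y\ge 0$ at $\tstart$, and the \IEs $y \pluseq 1$ at $\tstart+1$ and $x := -5$ at $\tend$. This $b$ is eligible for rolling and well-orderable. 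With $s(x)=s(y)=0$ and $r=6$, the checked values are $0,-4,-3,-2,-1,0$, yet $s^\ast(\psi)=0$ and $s^\ast(\psi[6,b]) = -5+0+5 = 0$.

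The paper's proof has the same blind spot. Its remark $\psi[1,b]=\psi$ is false under the stated definition of $\psi[r,b]$. It is harmless when every non-increment assignment to a variable of $\psi$ precedes $h_i$ in $\AICEs{b}$, since then the two agree at $s^\ast$. To get a true statement, either add that hypothesis, after which your affine argument on $[1,r]$ is complete, or add the check $s^\ast\models\psi[2,b]\unrhd 0$ and run the argument on $[2,r]$.

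Finally, your appeals to 2(a) (that $\psi'$ stays frozen and the $\delta x_j$ are constant) need it read as ``not in the right-hand side of any other \emph{effect}''. Literally it mentions only other \IEs. That reading allows $\{x \pluseq y,\ y \pluseq 1\}$ inside one \IE, which makes $x$ quadratic in $r$.
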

\begin{proof}
The condition must hold at the first repetition of $b$ (notice that $\psi[1,b] = \psi$) in $res(s, h_1;\dots;h_{i-1})$ i.e., after all the happenings before $h_i$ have been applied. Then, at the next iteration, it must hold at 
$$res(s, \underbrace{h_1;\dots;h_{i-1}}_{r=1};\underbrace{h_i;\dots;h_m;h_1;\dots;h_{i-1}}_{r=2}).$$
Notice that the subsequence corresponding to $r=2$ contains all the happenings of $\AICEs{b}$, thus we can apply all the \IEs in $\AIEs{b}$. The thesis follows from the monotonicity in $r$ of $\psi[r,b]$ (see  \cite{Scala_Ramirez_Haslum_Thiebaux_2016_Rolling}).
\end{proof}  

\paragraph{The Causal Transition Relation} Let $\Pi = \tuple{X,A,I,G,C,E}$ be a planning task and $\pattern = h_1;\dots;h_k$ be a pattern. The causal transition relation, i.e., $\mS^\pattern(\mX, \mH^\pattern, \mX')$, is the conjunction of the formulae in the following sets. 
\begin{enumerate}
    \item $\op{frame}^\pattern(X):$ for each propositional variable $v \in X$ and numeric variable $x \in X$, we have
    \begin{gather*}
        v' \leftrightarrow \sigma_k(v), \qquad x' = \sigma_k(x),
    \end{gather*}
    i.e., the next state variables $v' \in \mX'$ ($x' \in \mX'$) assume the value based on the expressions $\sigma_k(v)$ (resp. $\sigma_k(x)$).
    \item $\op{plan\text{-}intermediate\text{-}causal}^\pattern(C,E)$ composed of the conjunction of the formulae imposing that
    \begin{enumerate}
        \item each plan-\ICs/\IEs in $C$ and $E$ are applied at most once, i.e.,
        \begin{gather*}
            \bigand_{\substack{i \in [0,k] \\ h_i \in C}} (h_i \leq 1) \land \bigand_{\substack{i \in [0,k] \\ h_i \in E}} (h_i \leq 1),
        \end{gather*}
        \item all the plan-\ICs/\IEs in $C$ and $E$ are applied, i.e.,
        \begin{gather*}
            \Big( \sum_{\substack{i \in [0,k] \\ h_i \in C}} h_i \Big) = |C| \land \Big( \sum_{\substack{i \in [0,k] \\ h_i \in E}} h_i \Big) = |E|,
        \end{gather*}
    \end{enumerate}
    \item $\op{action\text{-}intermediate\text{-}causal}^\pattern(A)$ containing, for each durative action $b = \tuple{\icond(b), \ieff(b), [L,U]} \in A$, 
    \begin{enumerate}
        \item and for each occurrence $h_i \in \icond(b) \cup \ieff(b)$ in $\pattern$, the formula imposing that all the $\ICs$ in $\icond(b)$ and all $\IEs$ in $\ieff(b)$ must be applied the same number of times of $h_i$ in at least one of their copies in $\pattern$, i.e.,
        \begin{gather*}
            \bigand_{c \in \icond(b)} \bigor_{\substack{j \in [0,k]  \\ h_j = c}} (h_i = h_j) \land \bigand_{e \in \ieff(b)} \bigor_{\substack{j \in [0,k] \\ h_j = e}} (h_i = h_j),
        \end{gather*}
        intuitively denoting that if $h_i$ is applied, then all other \ICEs of $b$ must be applied in at least one of the copies in $\pattern$,
        \item for each pair $h_p = b^\vdash, h_q = b^\dashv$ in $\pattern$ with the same number of applications, the number of times $b$ is started (via all the copies of $b^\vdash$ in $\prec$ between $h_p$ and $h_q$) should be proportional to the number of times its \ICEs are applied in between $h_p$ and $h_q$, i.e.,
        \begin{gather*}
            h_p = h_q \implies \sum_{\substack{i \in [p,q]\\ h_i = b^\vdash}} h_i \times |\icond(b) \cup \ieff(b)| = \sum_{\substack{i \in [p,q]\\ h_i \in \icond(b)}} h_i + \sum_{\substack{i \in [p,q]\\ h_i \in \ieff(b)}} h_i,
        \end{gather*}
        \item for each $h_p = b^\vdash$ (resp. $h_q = b^\dashv$) applied there is at least one $h_j = b^\dashv$ (resp. $h_j = b^\vdash$) with $j > p$ (resp. $j < q$) and with the same repetitions:
        \begin{flalign*}
            h_p > 0 \implies \bigor_{\substack{j \in (p,k] \\ h_j = b^\dashv}} (h_j = h_p), \qquad h_q > 0 \implies \bigor_{\substack{j \in [1,q) \\ h_j = b^\vdash}} (h_j = h_q),
        \end{flalign*}
    \end{enumerate}
    \item $\op{amo}^\pattern(A)$: for each durative action $b$ not eligible for rolling or not well-orderable, and for each occurrence $h_i = b^\vdash$ in the pattern $\pattern$, the happening can be repeated at-most-once, i.e.,
        \begin{gather*}
            h_i \leq 1,
        \end{gather*}
    \item $\op{conditions}^\pattern(A,C)$: for each $c = \tuple{\tau^\vdash, \tau^\dashv, \cond(c)} \in \icond(\Pi)$ and for each occurrence $h_i = c$ in the pattern $\pattern$ then the condition $\cond(c)$ holds during the first and during the $h_i$-th repetition (Thm. \ref{thm:rolling}), i.e.,
        \begin{flalign*}
            h_i > 0 &\implies \sigma_{i-1}(\cond(c)) \\
            h_i > 1 &\implies ~~~\bigand_{\mathclap{\psi \unrhd 0 \in \cond(c)}}~~~ \sigma_{i-1}(\psi[h_i, b]).
        \end{flalign*}
\end{enumerate}

\subsubsection{The Temporal Transition Relation} \label{sec:spp-temporal}
Let $\Pi = \tuple{X,A,I,G,C,E}$ be a planning task and $\pattern = h_1;\dots;h_k$ be a pattern. The temporal transition relation $\mT^\pattern(\mX, \mH^\pattern, \mC^\pattern)$ has to $(i)$ assign a time to the happenings $\Pi$, $(ii)$ assign a duration to durative actions, $(iii)$ guarantee the $\epsilon$-separation and no-overlapping properties for plan validity, $(iv)$ impose the order of $\pattern$ for happenings in mutex with each other. Let $b$ be a durative action. We have seen how in $\mC^\pattern$ we associate to each happening $h_i = b^\vdash$ a variable $d_i$ denoting the duration of $b$ started by $h_i$. For all other happenings of $b$ appearing in $\pattern$, in the temporal transition relation, we need to retrieve the duration of $b$ in a compact way. For this reason, for each durative action $b \in A$ we construct the following expression inductively: let $\delta_0(b) = 0$, then for each $i \in \set{1,\dots, k}$, we define
\begin{equation}
    \delta_i(b) = \begin{cases}
        \ite(h_i > 0, d_i, \delta_{i-1}(b)) & \text{if $h_i = b^\vdash$}\\
        \delta_{i-1}(b) & \text{otherwise}.
    \end{cases} \label{eq:delta_i(b)}
\end{equation}
\begin{example*}
    Let's suppose we have in $\Pi$ only one action with three happenings and a duration in  $[6,10]$. The three happenings are one at \tstart, one in $\tstart + 4$ and the other at \tend. Let's suppose we have a pattern with two copies of the happenings of $b$, i.e., $\pattern = h_1;h_2;h_3;h_4;h_5;h_6$ with $h_1 = h_4 = b^\vdash$, $h_3 = h_6 = b^\dashv$ and $h_2$ and $h_5$ the happenings in $\tstart + 4$. Then the duration associated to $h_2$ and $h_5$ is $\delta_2(b)$ and $\delta_5(b)$, respectively, as
    \begin{flalign*}
        \delta_2(b) &= \ite(h_1 > 0, d_1, 0),\\
        \delta_5(b) &= \ite(h_4 > 0, d_4, \ite(h_1 > 0, d_1, 0)).
    \end{flalign*}
\end{example*}

Thus, the temporal transition relation $\mT^\pattern(\mX, \mH^\pattern, \mC^\pattern)$ is the conjunction of the formulae in the following sets:
\begin{enumerate}
    \item $\op{make\text{-}span}^\pattern:$ where the make-span $M$ is constrained as
    \begin{gather*}
        \bigand_{\substack{i \in [0,k]\\ h_i = b^\dashv}}ms \geq t_i, \qquad \bigor_{\substack{i \in [0,k]\\ h_i = b^\dashv}} ms = t_i,
    \end{gather*}
    imposing that the make-span $ms$ is greater than all the times in which any durative action ends and equal to at least one time it ends (i.e. the max).
    \item $\op{plan\text{-}intermediate\text{-}temporal}^\pattern(C,E)$: for each plan-\IC $c = \tuple{\tau^\vdash, \tau^\dashv, \cond(c)} \in C$ or plan-\IE $e = \tuple{\tau, \eff(e)} \in E$, let $h_i$ and $h_j$ be the occurrences of $c$ and $e$ in $\pattern$, respectively. We then have the formula imposing the relative time of $c$ and $e$, i.e.,
        \begin{gather*}
            h_i > 0 \implies t_i = \tau[0, ms], \quad h_j > 0 \implies (t_j = \tau^\vdash[0,ms] \land t_j^\dashv = \tau^\dashv[0,ms])
        \end{gather*}
        where $\tau[0,ms]$ and $\tau^\vdash[0,ms]$ are defined by Eq. \ref{eq:tau_a_b}, 
    \item $\op{action\text{-}intermediate\text{-}temporal}^\pattern(A)$ containing, for each durative action $b = \tuple{\icond(b), \ieff(b), [L,U]}$, for each pair $h_p = b^\vdash$ and $h_q = b^\dashv$ in $\pattern$, the formula imposing that if $h_p$ and $h_q$ are applied, then it must exist, for each happening of $b$, one copy in $\pattern$ with the absolute time relative to $t_p$ and $t_q$, i.e.,
        \begin{flalign*}
            h_p = h_q &\implies \bigand_{\substack{c \in \icond(b)\\}} \big( \bigor_{\substack{i \in [p,q] \\ h_i = c}} t_i = \tau^\vdash[t_p, t_q] \big) \land \big( \bigor_{\substack{i \in [p,q] \\ h_i = c}} t_i^\dashv = \tau^\dashv[t_p,t_q] \big), \\
            h_p = h_q &\implies \bigand_{\substack{e \in \ieff(b)}}\bigor_{\substack{i \in [p,q] \\ h_i = e}} t_i = \tau[t_p, t_q],
        \end{flalign*}
    \item $\op{dur}^\pattern(A)$: for each happening $h_i$ in $\pattern$ and each $h_j$ representing an $\IC$ we have that\footnote{In our semantic, we specified that the start of a durative action is in $\rational^{>0}$. To simplify the encoding, we reserve the time in $0$ to happenings which are not applied, so that they do not interfere with the computations.}
    \begin{flalign*}
        h_i = 0 &\iff t_i = 0, \qquad h_j = 0 \iff t_j^\dashv = 0,
    \end{flalign*}
    and for each durative action $b = \tuple{\icond(b), \ieff(b), [L,U]} \in A$ and each $h_i = b^\vdash$ in $\pattern$, we have
    \begin{flalign*}
        h_i = 0 &\implies d_i = 0, \\
        h_i > 0 &\implies L \leq d_i \leq U,\\
        h_i > 0 &\implies \bigor_{\substack{j \in (i,k]\\h_j = b^\dashv}} (t_j = t_i + d_i),
    \end{flalign*}
    imposing that, if $b$ is applied, its duration is in $[L,U]$, and that there is at least one happening modeling the end of $b$ with the time coherent with the duration of $b$.
    \item $\op{\epsilon\text{-}separation\text{-}once}^\pattern$: for each $h_i$ and $h_j$ with $i<j$, in mutex with each other, and not both happenings of some durative action, we have
    \begin{flalign*}
        h_i > 0 \land h_j > 0 \implies t_j \geq t_i^\dashv \qquad & \text{if $h_i$ is an \IC and $h_j$ is an \IE},\\
        h_i > 0 \land h_j > 0 \implies t_j \geq t_i + \epsilon \qquad  & \text{if $h_i$ and $h_j$ are \IEs}.
    \end{flalign*}
    Intuitively, if $h_i$ is an \IC and $h_j$ is an \IE, then, by the semantic, $h_i$ is checked before $h_j$ is applied, so there is no need to $\epsilon$-separate. If $h_i$ is an \IE and $h_j$ is an \IC $c = \tuple{\tau^\vdash, \tau^\dashv, \cond(c)}$, we impose that $h_i$ and the start of $h_j$ are $\epsilon$-separated only if, before $h_i$, the conditions in $\icond(c)$ are not respected\footnote{After $h_i$ they must be respected because $i < j$, due to $\op{conditions}^\pattern(A,C)$ in $\mS^\pattern(\mX, \mH^\pattern, \mX')$}, i.e.,
    \begin{flalign*}
        h_i > 0 \land h_j > 0 \land \neg \sigma_{i-1}(\cond(c)) \implies t_j \geq t_i + \epsilon
    \end{flalign*}

    \item $\op{\epsilon\text{-}separation\text{-}rolling}(A)^\pattern$: for each $h_i$ and $h_j$ with $(i)$ $h_i$ being an happening of a durative action $b$ and $h_j$ being not, $(ii)$ $i<j$ and $(iii)$ $h_i$ and $h_j$ in mutex with each other or different copies of the same happening, we have, if 
    \begin{flalign*}
        h_i > 1 \land h_j > 0 \implies t_j \geq t_i^\dashv + (\delta_i(b) + \epsilon_b) \times (h_i - 1) \quad & \text{if $h_i$ is an \IC},\\
        h_i > 1 \land h_j > 0 \implies t_j \geq t_i + (\delta_i(b) + \epsilon_b) \times (h_i - 1)+ \epsilon \quad  & \text{if $h_i$ and $h_j$ are \IEs},
    \end{flalign*}
    with $\delta_i(b)$ defined in Eq. \ref{eq:delta_i(b)}, i.e., the $h_i$-th repetition of $h_i$ must be $\epsilon$-separated from $h_j$. If $h_i$ is an \IE, $h_j$ is an \IC and $h_j$ is rolled, we impose the $\epsilon$-separation between $h_i$ and the start of $h_j$
    \begin{flalign*}
        h_i > 0 \land h_j > 1 \implies t_j \geq t_i + \epsilon.
    \end{flalign*}
    \item $\op{no\text{-}overlap}^\pattern(A)$: for each action $b \in A$ and for each $h_i = b^\vdash$ and $h_j = b^\vdash$ with $i < j$ we have
    $$h_i > 0 \land h_j > 0 \implies t_j \geq t_i + (d_i + \epsilon_b) \times h_i.$$
\end{enumerate}

\subsection{Correctness and Completeness} \label{sec:correctness-completeness}
Let $\Pi= \tuple{X,A,I,G,C,E}$ be a planning task and let $\pattern = h_1; h_2; \ldots; h_k$, $k \ge 0$ be a pattern. Though the pattern $\pattern$ can correspond to any sequence of plan/action-\ICs/\IEs, it is clear that it is pointless to have in $\prec$ $(i)$ an happening $h_i = b^\vdash$ which is not followed by all the happenings of $b$, $(ii)$ an happening $h_j = b^\dashv$ which is not preceded by all the happenings of $b$, $(iii)$ in between two happenings $h_p = b^\vdash$ and $h_q = b^\dashv$, not have copies of all the happenings in $\icond(b) \cup \ieff(b)$, $(iv)$ if the planning task is well-orderable, an order between the happenings in $\prec$ that doesn't follow the one specified by $\AICEs{b}$ for each durative action $b$, $(v)$  an happening $h_j$ corresponding to an $\IE$ of an action $b$ before an happening $h_i$ corresponding to an $\IC$ of an action $b$ if they have the same relative time. On the other hand, it makes sense to consider patterns with non-consecutive occurrences of the same happening. 

\begin{example*} 
    Suppose there are two durative actions: $b_1$ with happenings $h_1=b_1^\vdash$, $h_2=b_1^\dashv$ and $b_2$ with happenings $h_3=b_2^\vdash$ and $h_4=b_2^\dashv$, with $h_2$ in mutex with $h_3$. It could make sense to consider the pattern
    \begin{equation}
        \pattern = h_1;h_2;h_3;h_2;h_4;h_2, \label{eq:non-simple}
    \end{equation}
    to suggest that $b_1$ starts before $b_2$ but we are not sure whether $b_1$ will end before/during/after $b_2$. This pattern allows covering all three cases without extending the pattern.
\end{example*}

No matter how $\pattern$ is defined,
 the {\sl $\pattern$-encoding $\Pi^\pattern$ of $\Pi$ (with bound $1$)} is correct, where $\Pi^\pattern$ is defined in Eq. \ref{eq:pattern-encoding}. We denote with $\indpi$ the sequence of happenings obtained by sequencing each \ICs $c$ of $\Pi$ such that $\tuple{t^\vdash, t^\dashv, \cond(c)} \in \icond(\pi)$ (Eq. \ref{eq:icond}) and each \IE $e$ of $\Pi$ such that $\tuple{t, \eff(e)} \in \ieff(\pi)$ (Eq. \ref{eq:ieff}) according to their execution times (i.e., $t^\vdash$ and $t$). Completeness is guaranteed once we ensure that $\indpi$ is a subsequence of the pattern used in the encoding. This can be achieved by starting with a complete pattern, and then repeatedly chaining it until $\Pi^\pattern$ becomes satisfiable (Alg. \ref{alg:spp}).
We saw in Sec. \ref{sec:pattern-definition}, the definition of a complete pattern. We now define $\pattern^n$ to be the sequence of actions obtained concatenating $\pattern$ for $n \ge 1$ times. Finally,
$\Pi^\pattern_n$ is the {\sl pattern $\pattern$-encoding of $\Pi$ with bound $n$}, obtained from Eq. \ref{eq:pattern-encoding} by considering $\pattern^n$ as the pattern $\pattern$.

\begin{theorem} \label{th:compl}
Let $\Pi$ be a planning task. Let $\pattern$ be a pattern. 
Any model of $\Pi^\pattern$ corresponds to a valid temporal plan of $\Pi$ (correctness). If $\Pi$ admits a valid temporal plan and $\pattern$ is complete, then for some $n\ge 0$, $\Pi^\pattern_n$ is satisfiable (completeness).
\end{theorem}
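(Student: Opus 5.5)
For correctness, I would take a model $\mu$ of $\Pi^\pattern$ and let $\pi = \Call{GetPlan}{\mu,\pattern}$. I would then show that the timed states $I(\pi)$ agree with the symbolic states $\sigma_i$ evaluated under $\mu$. The key step is a simulation lemma, proved by induction on $i\in[0,k]$. It states that, after collecting all absolute \IEs of $\pi$ scheduled at times at most $\mu(t_i)$ (including the rolled repetitions), the resulting state equals $\mu(\sigma_i(\cdot))$ on every $v\in X$.

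The ordering facts this induction needs all come from $\mT^\pattern$. The $\op{\epsilon\text{-}separation\text{-}once}$ and $\op{\epsilon\text{-}separation\text{-}rolling}$ constraints ensure that mutex happenings occur in pattern order and are $\epsilon$-apart. Consequently, any reordering between the pattern order and the time order swaps only non-mutex happenings, and such swaps commute. The $\op{no\text{-}overlap}$ constraint ensures that the copies of $b^\vdash$ produce non-overlapping executions, and that each rolled block of $b$ is contiguous.

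Rolled increments are handled by the clause $\sigma_{i-1}(v)+h_i\times\sigma_{i-1}(\psi)$. This is exact because rolling eligibility (condition 2a) guarantees $\psi$ is unaffected by the repetitions.

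Given the lemma, the validity conditions follow one at a time:
\begin{enumerate}
\item Initial and goal conditions follow from $\mI$ and $\mG$ together with $\op{frame}^\pattern$.
\item Intermediate conditions follow from $\op{conditions}^\pattern$ plus Thm.~\ref{thm:rolling} for $r>1$. For the interior of an interval $[t^\vdash,t^\dashv]$, I would argue that every \IE falling strictly inside is either non-mutex with $c$, so the condition is preserved, or is forced by $\op{\epsilon\text{-}separation\text{-}once}$ (\IC before \IE gives $t_j\ge t_i^\dashv$) to lie outside.
\item Action structure (each start matched by an end with the right duration and all \ICEs at the right offsets) follows from $\op{action\text{-}intermediate\text{-}causal}$, $\op{action\text{-}intermediate\text{-}temporal}$ and $\op{dur}$.
\item Plan-\ICEs occur exactly once at the right times by the plan-intermediate formulas, with $ms$ equal to the make-span by $\op{make\text{-}span}^\pattern$.
\item No self-overlap and $\epsilon$-separation follow directly from the corresponding constraints.
\end{enumerate}

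For completeness, I would take a valid plan $\pi$ and form $\indpi$, the time-ordered sequence of its \ICs and \IEs. Since $\pattern$ is complete, every happening of $\Pi$ occurs in $\pattern$. So if $N$ is the length of $\indpi$, then $\indpi$ is a subsequence of $\pattern^N$: pick the $j$-th element of $\indpi$ inside the $j$-th copy of $\pattern$.

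I would then build a model of $\Pi^\pattern_N$ as follows. Set $h_i=1$ on the chosen occurrences and $0$ elsewhere, so no rolling is used; this is why the eligibility restriction does not harm completeness. Set $t_i$, $t_i^\dashv$, $d_i$ from $\pi$, set $ms=\ms(\pi)$, and set all unchosen times to $0$ as $\op{dur}^\pattern$ requires.

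Since unchosen happenings contribute nothing to $\sigma_i$, the expressions $\sigma_i$ evaluated along the pattern replay $I(\pi)$ exactly. Happenings simultaneous in $\pi$ can be ordered \ICs first, matching the semantics that conditions are read before effects apply. The causal constraints then hold by validity of $\pi$. The temporal constraints hold because mutex happenings in $\pi$ are $\epsilon$-separated and appear in $\indpi$ in time order. The $\op{\epsilon\text{-}separation\text{-}rolling}$ and rolled-\IC clauses are vacuous since every $h_i\le1$.

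One step needs care: the matching in $\op{action\text{-}intermediate\text{-}causal}$ (b) and (c) and in $\op{action\text{-}intermediate\text{-}temporal}$. The selected $b^\vdash$/$b^\dashv$ pair must bracket exactly that execution's happenings. The counting equation sums over all copies between $h_p$ and $h_q$, including starts of other executions of $b$, so it must be checked that no extra copies of $b$'s happenings are selected in between. This holds by no-self-overlap: a second execution of $b$ cannot start before the first ends.

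I expect the main obstacle to be the correctness direction for conditions spanning intervals and for rolled actions. There one must argue that the time-sorted semantics and the pattern-order $\sigma$ semantics coincide even though non-applied or interleaved happenings may be out of time order. I would first prove the commutation claim: adjacent non-mutex happenings can be swapped without changing any $\sigma$ value or the truth of any condition. I would then bubble-sort the applied happenings from pattern order into time order, using the $\epsilon$-separation constraints to guarantee that only non-mutex pairs are ever swapped.
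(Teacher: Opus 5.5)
Your route is the paper's: for correctness, the pattern-order expressions $\sigma_i$ replay the timed semantics because the $\epsilon$-separation axioms align pattern order with time order on mutex happenings; for completeness, embed $\indpi$ into $\pattern^{|\indpi|}$ and pad with zeros. The step carrying the correctness half is that every applied mutex pair occurs in time in pattern order, so your bubble-sort only ever swaps non-mutex neighbours. That step is not implied by $\mT^\pattern$ as written. The paper's sketch asserts it just as briefly, so the gap is inherited rather than introduced. It fails in two ways.

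\emph{Same-action pairs.} $\op{\epsilon\text{-}separation\text{-}once}$ exempts pairs of happenings of the same durative action. $\op{action\text{-}intermediate\text{-}temporal}$ fixes only the \emph{times} of the copies selected in $[p,q]$, not their relative positions. Take one action $b$ with duration $[3,3]$, trivial start/end \ICs, an \IE $e=\tuple{\tstart+1,\set{x\asseq 0}}$ and an \IC $c=\tuple{\tstart+2,\tstart+2,\set{x=1}}$, with $I(x)=1$ and $G=\set{x=0}$. No valid plan exists. Yet with $\pattern=b^\vdash;e;c;b^\dashv$ (which lists $b$'s happenings in time order, and is what the \arpg produces), $\Pi^\pattern_2$ is satisfied. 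Apply $b^\vdash,c$ from the first copy and $e,b^\dashv$ from the second, at times $1,3,2,4$. Then $\sigma$ evaluates $c$ before $e$, the clock puts $e$ first, and no clause relates the two (reading the $b^\vdash/b^\dashv$ pair quantifiers with $p<q$).

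\emph{Conditional \IE-before-\IC clause.} This clause fires only under $\neg\sigma_{i-1}(\cond(c))$. Take snap actions $x\minuseq 5$, $y\pluseq 5$, and a third with precondition $x+y=2$, in that pattern order, with $I(x)=0$, $I(y)=2$. The constraints admit the $y$-increment at time $1$, the check at time $2$, and the $x$-decrement at time $3$. The clause fires only for the $y$-increment (there $\sigma$ gives $x+y=-3$), and $\op{conditions}^\pattern$ sees $x+y=2$. But the actual state at time $2$ has $x+y=7$. Your interior-of-interval argument has the same blind spot, since it only treats \IEs that follow the \IC in the pattern.

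So your simulation lemma is false for the encoding as stated. Correctness cannot be derived from the listed constraints without strengthening them, e.g.\ with ordering constraints for same-action mutex pairs and unconditional separation for mutex \IE/\IC pairs with the \IE first.

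On the completeness side, the literal antecedent $h_p=h_q$ in $\op{action\text{-}intermediate\text{-}causal}$(b) and $\op{action\text{-}intermediate\text{-}temporal}$ also holds for two unused copies. Zero-padding then violates these formulas. An unused $b^\vdash/b^\dashv$ pair bracketing part of a selected execution breaks the count. An offset $\tstart+k$ with $k>0$ demands a copy in $[p,q]$ at time exactly $k$, which the padded model generally lacks. Your no-self-overlap argument therefore only works with the antecedent read as $h_p=h_q>0$.
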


\begin{proof}
    Correctness: Let $\mu$ be a model of $\Pi^\pattern$ and $\pi$ its associated plan.  The $\epsilon$-separation axioms ensure that the relative order between mutex actions in $\pi$ and in $\pattern$ is the same. The causal transition relation ensures that  executing sequentially the actions in $\pi$ starting from $I$  leads to a goal state. The axioms in the pattern time encoding are a logical formulation of the corresponding properties for the validity of $\pi$. Completeness: Let $\pi$ be a valid temporal plan. The formula $\Pi^{\indpi}$, i.e., where $\indpi$ is used as pattern, is satisfied by the model $\mu$ whose associated plan is $\pi$. Let $n = |\indpi|$ be the number of happenings in $\indpi$. For any complete pattern $\pattern$, $\indpi$ is a subsequence of $\pattern^n$ and  $\Pi^\pattern_{n}$ can be satisfied by extending $\mu$ to assign $0$ to all the action variables not in $\indpi$. 
\end{proof}

\begin{example*}
    Suppose that the only valid plan $\pi$ is such that 
    $\indpi = h_1;h_2;h_3$. Suppose that we employ a pattern $\pattern = h_3;h_2;h_1$. Then, $\indpi$ is a subsequence of $\pattern^3$, i.e.,
    $$\pattern^3 = h_3;h_2;h_1;h_3;h_2;h_1;h_3;h_2;h_1,$$
    and $\Pi^\pattern_3$ is satisfied.
\end{example*}

\begin{theorem}[\cite{DBLP:conf/kr/CardelliniG25}]
    Let $\Pi$ be a planning task. The \spp algorithm is {\sl correct} (any returned plan is valid) and {\sl complete} (if a valid plan exists, $\spp(\Pi)$ will return one).\footnote{Here, we follow the standard definition of completeness used in planning. As for all other numeric planning procedures, if no valid plan exists, the procedure doesn't terminate.}
    \begin{proof}
        (Correctness) It follows from the correctness of $\Pi^\pattern$. (Completeness) Let $\pi$ be a valid temporal plan. Let $n = |\indpi|$ be the number of happenings in $\indpi$. Let $\pattern_h$ be the complete pattern computed by $\Call{ComputePattern}{I, \Pi}$. If no goal is ever satisfied, Lines \ref{alg:f:else} and \ref{alg:f:concat}, at the $n$ iteration, construct the pattern $\pattern_h^n$. The rest of the completeness follows from Thm. \ref{th:compl}.
    \end{proof}
\end{theorem}

Notice that when two happenings $h$ and $h'$ are not in mutex and they are not denoting the starting/ending of the same action, the pattern does not lead to an ordering on their execution times. For this reason, we may find a valid plan $\pi$ for $\Pi$ even before $\pattern^n$ becomes a supersequence of $\indpi$.

\section{Experimental Analysis} \label{sec:experimental}
        
We developed our Temporal Numeric Planning with \ICEs encoding in the \patty solver\footnote{\url{https://pattyplan.com}} which implements the \spp algorithm showed in Alg. \ref{alg:f} and in \cite{DBLP:conf/kr/CardelliniG25}. We considered 8 domains coming from the literature, at which we added our motivating example, the \instradi domain. Before presenting and commenting the analysis, we first describe the domains and the planners.

\paragraph{Domains}
We run our experimental analysis on the following domains. After the name of the domain, we write whether the domain contains \ICEs or not (denoted with \ICEs or \textsc{t}). Domains without \ICEs are written in \pddl{2.1} \cite{Fox_Long_2003}, while domains with \ICEs are expressed in \anml \cite{smith2008anml} since \pddl{2.1} doesn't have a way to express \ICEs.\footnote{Download the domains at \url{https://matteocardellini.it/aij/domains_aij.zip}.}
\begin{description}
    \item \textsc{Cushing (t)} is the only domain with \textsl{required concurrency} \cite{DBLP:conf/ijcai/CushingKMW07} in the 2018 International Planning Competition ({\ipc}) \cite{ipc2018}, the last with a temporal track. The domain implements into a planning task Figure 1 of the \citet{DBLP:conf/ijcai/CushingKMW07} paper, having different actions that represent different kinds of \textsl{temporal expressiveness} \cite{DBLP:conf/ijcai/CushingKMW07}.
    \item \textsc{Pour (t)} implements the motivating example in \cite{Cardellini_Giunchiglia_2025}: we have $p$ bottles on the left with some integer litres of liquid and $q$ empty bottles on the right. We can pour from one bottle on the left to a bottle on the right, pouring one litre per second, only when bottles are uncapped. A durative action uncaps a bottle only for $5$s. All the pouring must happen during this durative action. In the domain, one could exploit rolling, since the pouring action is eligible for rolling.
    \item \textsc{Shake (t)} \cite{Cardellini_Giunchiglia_2025}: a bottle of soda can be shaken with duration $4s$ and can be capped for $5s$. The shake action can start only if the bottle is capped and must end when the bottle is uncapped, and its effect is to lose all the contained litres of the bottle. In the initial state, all the bottles are uncapped and have some liquid inside, the goal state is to have them all empty. To achieve the goal, the shake action must always start during the capping action and must always finish after its end.
    \item \textsc{Pack (t)} \cite{Cardellini_Giunchiglia_2025}: the bottles have to be packed in pairs. There is a pack action of a bottle that, at its start, increases by one a counter that counts the number of bottles loaded on the packing platform. The action can start if there are less than two bottles in the packing platform and can end if there are exactly two bottles, completing the packing process. An instantaneous action resets the counter. Due to this construction, two pack actions must concur to pack two bottles together, and one pack action has to start before the other pack action ends.
    \item \textsc{Match (t)} \cite{halsey2004crikey}: the only domain with required concurrency appearing in the temporal track of the 2014 \ipc \cite{ipc2014}. In the domain, we are required to light a match, through a durative action that lasts $70s$ while concurrently fixing several fuses.
    \item \textsc{Oversub (t)} \cite{Panjkovic_Micheli_2023}: a synthetic problem in which it is encoded a oversubscription planning problem.
    \item \instradi \textsc{(\ICEs)}: the motivating example of this paper. We model instances from $1$ to $20$ trains that run on the station depicted in Fig. \ref{fig:instradi}. Each train arrive either at entry point \ttt{01} or \ttt{03}, has to stop at a platform, and needs to exit either from exit point \ttt{04} or \ttt{02}, respectively.
    \item \textsc{Majsp (\ICEs)} \cite{valentini_temporal_2020}: A job-shop scheduling problem in which a fleet of moving agents transport items and products between operating machines.
    \item \textsc{Painter (\ICEs)} \cite{valentini_temporal_2020}: a worker has to apply several coats of paint on a set of items guaranteeing a minimum and a maximum time between two subsequent coats on the same item.
\end{description}

\paragraph{Planners} 
The analysis compares our system \patty implemented by modifying the planner in \cite{DBLP:conf/aaai/CardelliniGM24}, written in Python, and employing the {\smt}-solver \ttt{Z3} v4.12.2 \cite{de2008z3}; the symbolic planners \anmlsmt (which corresponds to $\textsc{anml}^{\textsc{omt}}_{\textsc{inc}}(\textsc{omsat})$ in \cite{Panjkovic_Micheli_2023}) and \textsc{itsat} \cite{RankoohG15}; and the search-based planners \tamer \cite{valentini_temporal_2020}, \optic \cite{optic}, \textsc{lpg} \cite{LPG} and \textsc{TemporalFastDownward} (\textsc{tfd}) \cite{tfd}. \anmlsmt and \optic have been set to return the first valid plan they find. \patty, \tamer and \anmlsmt are the only planners which can parse \anml and deal with \ICEs. For planners which cannot deal with \ICEs we compiled the domains \instradi \textsc{(\ICEs)}, \textsc{Majsp (\ICEs)} and \textsc{Painter (\ICEs)} into \pddl{2.1} using the compilation by \citet{gigante_expressive_2022}. For all solvers, we set $\epsilon = 0.001$.

\begin{table}[t]
    \centering
    \resizebox{\textwidth}{!}{\Huge{
    \begin{tabular}{|l|c||ccccccc||ccccccc||ccc||}
\hline
 & & \multicolumn{7}{c||}{Solved}&\multicolumn{7}{c||}{Time (s)}&\multicolumn{3}{c||}{Bound}\\
Domain & \# & $\textsc{patty}$&$\textsc{tamer}$&\textsc{anmlsmt}&\textsc{Optic}&\textsc{itsat}&\textsc{lpg}&\textsc{tfd}&$\textsc{patty}$&$\textsc{tamer}$&\textsc{anmlsmt}&\textsc{Optic}&\textsc{itsat}&\textsc{lpg}&\textsc{tfd}&$\textsc{patty}$&\textsc{anmlsmt}&\textsc{itsat}\\
\hline
\textsc{Cushing (t)}&10&\textbf{10}&2&4&\textbf{10}&-&-&1&13.47&240.12&195.81&\textbf{3.12}&-&-&270.02&\textbf{3.0}&9.0&-\\
\textsc{Pour (t)}&20&\textbf{15}&6&1&-&-&-&-&\textbf{123.54}&222.69&286.24&-&-&-&-&\textbf{2.0}&14.0&-\\
\textsc{Shake (t)}&20&\textbf{20}&-&10&-&-&-&-&\textbf{2.26}&-&153.53&-&-&-&-&\textbf{2.0}&8.5&-\\
\textsc{Pack (t)}&20&7&\textbf{20}&4&-&-&-&-&205.15&\textbf{23.08}&242.53&-&-&-&-&\textbf{2.5}&11.0&-\\
\textsc{Match (t)}&40&\textbf{40}&\textbf{40}&\textbf{40}&\textbf{40}&\textbf{40}&-&-&3.10&\textbf{0.01}&0.36&\textbf{0.01}&0.68&-&-&\textbf{3.2}&9.0&4.0\\
\textsc{Oversub (t)}&20&\textbf{20}&\textbf{20}&\textbf{20}&\textbf{20}&-&\textbf{20}&-&4.36&7.15&0.04&\textbf{0.01}&-&0.08&-&\textbf{1.0}&3.0&-\\
\textsc{InSTraDi} (\textsc{ice}s)&20&\textbf{20}&4&2&-&-&-&-&\textbf{67.58}&247.41&279.54&-&-&-&-&\textbf{1.0}&20.5&-\\
\textsc{Majsp} (\textsc{ice}s)&20&16&\textbf{19}&10&-&-&-&-&91.97&\textbf{30.33}&158.03&-&-&-&-&\textbf{4.5}&16.0&-\\
\textsc{Painter} (\textsc{ice}s)&20&\textbf{20}&\textbf{20}&7&2&-&-&-&17.55&\textbf{0.05}&210.15&270.03&-&-&-&\textbf{1.5}&13.5&-
\\\hline
\textit{Best}&190&\textbf{168}&\textbf{131}&\textbf{98}&\textbf{72}&\textbf{40}&\textbf{20}&\textbf{1}&\textbf{44}&\textbf{79}&\textbf{9}&\textbf{70}&\textbf{0}&\textbf{0}&\textbf{0}&\textbf{168}&\textbf{0}&\textbf{16}\\\hline

        \end{tabular}}}
    \caption{Comparative analysis. A “-” indicates that no result was obtained in our 5min time limit. The best results are in bold.}
    \label{tab:results}
\end{table}
\paragraph{Analysis of the results} Table \ref{tab:results} shows the name of each domain, the number of instances for each domain, the number of instances solved (subtable "solved") for each planner, the average time to solve the instances, having the time equal to timeout when the instance is not solved (subtable "Time (s)"), and for planners based on $\pas$, the average number of steps (or bound) to solve the instances commonly solved by all planners that could solve at least one instance (subtable "Bound"). The best results are in bold.
We also include a final line, labelled \textit{Best}, reporting on how many of the considered 190 problems each planner obtained the best result. Each planner had a time limit of $5$ minutes on an Intel Xeon Platinum $8000$ $3.1$GHz with 8 GB of RAM. 

Tab. \ref{tab:results} shows that, out of the $190$ instances, our solver \patty is the one solving the most ($168$), followed by \tamer ($131$), \anmlsmt ($98$), \optic ($72$), \textsc{itsat} ($40$), \textsc{lpg} ($20$) and $\textsc{tfd}$ ($1$). Concerning time, several planners, like \tamer, \optic, \itsat and \lpg can almost instantaneously solve all instances of \textsc{Match (t)} and \textsc{Oversub (t)}. \optic is very fast on the \textsc{Cushing (t)} domain and \tamer is very fast on the \textsc{Painter (\ICEs)} domain. Concerning the bound, we see how \patty has always a lower bound than all other \smt solvers. 
Interestingly, in the \instradi \textsc{(\ICEs)} and \textsc{Oversub (t)} domains, \patty solves all instances with bound $n=1$. Indeed, for those problems, the pattern computation can correctly produce the causal ordering between the actions in the plan, and what is left to do is to correctly choose which actions to apply and assign a timing to each durative action.

\begin{figure}[t]
    \centering
    \includegraphics[width=1\linewidth]{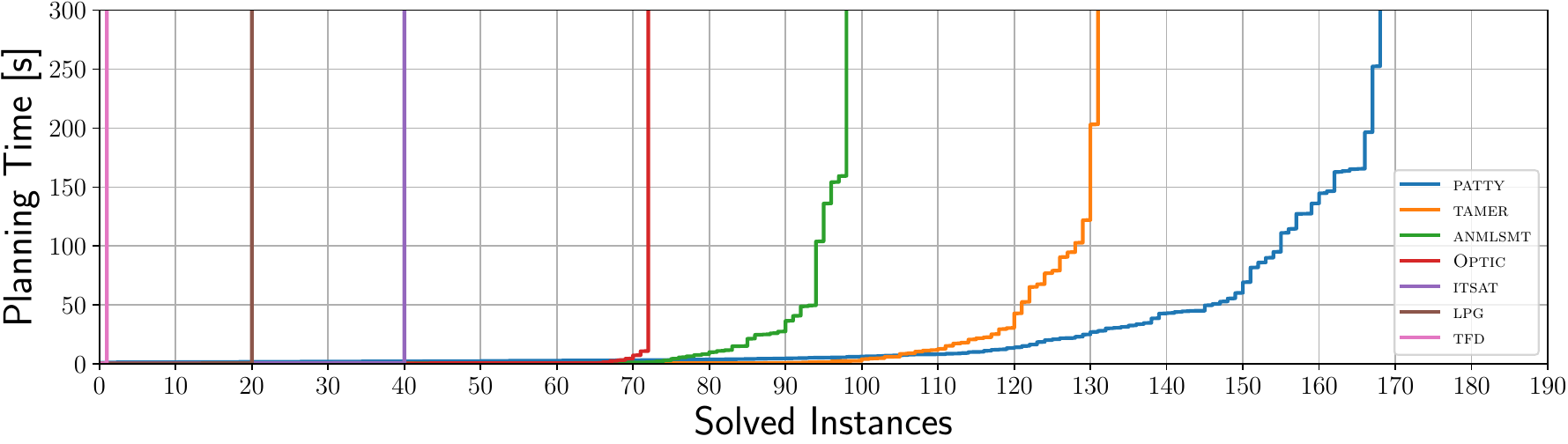}
    \caption{Number of problems solved ($x$-axis) in a given time ($y$-axis), by all the presented systems. In the legend, the planners are listed in reverse order of when their curve intersects the timeout line.
    }
    \label{fig:cactus}
\end{figure}

The cactus plot in Figure~\ref{fig:cactus} summarizes the performance of all the systems we presented. The graph plots how many problems can be solved in a given time. As it can be seen, \tamer can solve the first $106$ problems a little bit faster than \patty, but then, afterwards, \patty is consistently faster. We believe this is because $(i)$ \patty is implemented in Python, while \tamer is implemented in \textsc{c$^{++}$}, a notably faster language, $(ii)$ while heuristic search methods (like \tamer) can start immediately after parsing and grounding the problem, in our approach we have to first compute the pattern with Alg. \ref{alg:compute-arpg}, that, albeit being linear in the number of actions, still requires some time, and $(iii)$ \patty compiles the problem into an \smt formula, encoded in \textsc{smt-lib} \cite{BarFT-SMTLIB} that is then passed to the external solver \ttt{Z3}, which take some additional time to initialize. Despite this, \patty can, in the long rung, solve more instances than \tamer.

The results in Tab. \ref{tab:results} and Fig. \ref{fig:cactus} show the competitiveness of our approach. It is clear that some planners are better suited for some domains over other planners, but our planner \patty can solve the broadest number of instances across all domains. Moreover, \patty has competitive performances against search-based solvers, showing that \pas approaches can still be competitive against them.

\section{Conclusions and Future Works}
In this paper, we extended the work of \cite{DBLP:conf/aaai/CardelliniGM24} from Numeric Planning to Temporal Numeric Planning with \ICEs. In this fragment, we showed a novel approach to compute patterns, showing how we can reuse the \arpg construction by \citet{Scala_Haslum_Thiebaux_Ramirez_2016_AIBR}  for this fragment as well, and we presented a novel encoding based on the \spp approach. We performed an experimental analysis on our motivating example, the \instradi domain, and
several domains from the literature, showing how the \spp approach can be very competitive against other solvers in the field.

This paper showed that the \spp approach is well suited for various fragments of planning. 
In future, we plan to investigate how to encapsulate several search strategies, already present in the literature of heuristic search, inside the \spp approach, thus improving on Alg. \ref{alg:f}. 
For example, in Alg. \ref{alg:f}, the pattern $\pattern_h$ is recomputed at each intermediate state $s$ which satisfies a new subgoal, but the formula $\Pi^\pattern$ always constraints the first state to be the initial one, and thus we need to concatenate $\pattern_h$ with $\pattern_g$, i.e., the pattern of the plan to go from the initial state to $s$. 
One could ask if it would be beneficial to discard $\pattern_g$ and, in a greedy fashion, restart searching from $s$ with only $\pattern_h$. 
Moreover, the computed $\pattern_h$ is always a complete pattern, containing all the \ICEs of each durative action together with the plan-\ICEs in $C \cup E$. 
One could ask if it would be beneficial, and how, to discard some happenings in the pattern, making it incomplete, and only use some most promising one to reach a new subgoal. 
These approaches would greatly improve the speed of the planner, since the number of variables in the encoding is proportional to the length of the pattern, but would nonetheless result in a loss of completeness.
One could then investigate how to maintain completeness \emph{eventually}: i.e., guaranteeing that, if the search from $s$ with only $\pattern_h$ is unsuccessfully, the procedure reverts to the safe approach of Alg. \ref{alg:f} from the initial state and with $\pattern_g;\pattern_h$, or that if the use of an incomplete pattern is unsuccessful, we can increasingly add happenings to it, until it becomes a complete pattern.

Moreover, both in Numeric Planning in \cite{DBLP:conf/aaai/CardelliniGM24} and here in Tab. \ref{tab:results} we notice that there are several domains where all instances can be solved with bound $n=1$. This indicates that the problem of finding the causal order between actions (i.e., the pattern) is an easy task, and the \smt-solver's only job is to find which actions to execute and, in the case of temporal planning, the timings of the durative actions. One could thus investigate in which kinds of problems is always possible to solve the problem with $n=1$. Moreover, in our \spp approach, the pattern $\pattern_h$ computed at each intermediate state is always \textsl{simple} and complete, i.e., it contains exactly one copy of each happening. One could thus investigate on which domains it would be possible to have bound $n=1$ by lifting the simplicity of the pattern and computing a non-simple pattern, like the one in Eq. \ref{eq:non-simple}.

\section*{Declaration of Competing Interest }

\noindent The authors declare that they have no known competing financial interests or personal relationships that could have appeared to influence the work reported in this paper.

\section*{Data availability}

\noindent A link to the source code is included in the article. The domains are attached to the submission

\section*{Acknowledgements}

\noindent Matteo Cardellini and Enrico Giunchiglia acknowledge the financial support from PNRR MUR Project PE0000013 
 FAIR “Future Artificial Intelligence Research”, funded by the European Union – NextGenerationEU, CUP J33C24000420007, and from Project PE00000014 “SEcurity and RIghts in the CyberSpace”, CUP D33C22001300002.


 \bibliographystyle{elsarticle-harv} 
 \bibliography{cas-refs}
\end{document}